\documentclass[a4paper,10pt]{article}

\usepackage{publication}
\usepackage{caption}
\usepackage{amsmath, amsthm}
\usepackage{amssymb}
\usepackage{graphicx}
\usepackage{booktabs}
\usepackage{multirow, multicol}
\usepackage{ulem}
\usepackage{adjustbox}
\usepackage[backend=biber, natbib=true]{biblatex}
\newtheorem{theorem}{Theorem}

\title{MARVEL: Margin-Aware Robust von Mises-Fischer Expert Learning for Long-Tailed Out-of-Distribution Detection}

\author{A.S. Anudeep}
\contact{archakams@iisc.ac.in}
\affil{Department of Computational and Data Sciences, Indian Institute of Science}

\author{Vaanathi Sundaresan}
\contact{vaanathi@iisc.ac.in}

\keywords{Long-Tailed, Out-of-Distribution Detection, Multi-Expert Models}
\date{\today}

\begin{document}

\maketitle

\begin{abstract}
For clinical deployment, it is essential that automated diagnostic systems remain reliable when confronted with previously unseen cases, yet deep models routinely misclassify out-of-distribution (OOD) inputs with high confidence, underscoring the need for more robust OOD detection methods. Although substantial effort has been devoted to improving model robustness, most of the existing literature assumes balanced datasets, evaluates OOD detection on coarse or non-clinical OOD sources, or lacks comprehensive assessment across diverse OOD scenarios. To address the gaps, we propose a novel  methodology trained on diverse and imbalanced medical datasets and evaluated across a clinically reflective OOD spectrum. Our framework comprises three key components: (1) a Nonlinear von Mises-Fisher (NvMF) classifier capable of learning non-linear decision boundaries, with theoretical proof of its asymptotic connection to cosine classifiers; (2) a multi-expert framework in which margin-aware NvMF classifiers specialise in different regions of label distribution to better handle imbalance; and (3) an outlier expert trained explicitly to distinguish inlier from outlier data, thereby strengthening OOD detection. Evaluation on RFMiD, ISIC2019, and NCTCRC datasets demonstrates consistent improvements over state-of-the-art methods, achieving mean FPR95 reductions of 8.45\%, 13.02\%, and 36.90\% respectively. 
These gains are further supported by comprehensive ablations that validated the contributions of each component. This enables reliable identification of unfamiliar cases for deferral to clinicians, supporting safer AI-assisted diagnosis in real-world  workflows. Our code is available at \url{https://github.com/redboxup/MARVEL}.
\end{abstract}



\begin{multicols}{2}

\section{Introduction}\label{sec:introduction}
Deep neural networks (DNNs) have demonstrated promising performance across a wide range of medical imaging applications including disease classification and region-of-interest detection.
DNNs have often achieved performance on par to or at times even exceeding clinicians on standard benchmarks \citep{liu2019comparison,9363915, Esteva2017Dermatologist}. However, a typical in-distribution (ID) classification task using DNN assume a closed and balanced label space, whereas real-world data, especially in medical imaging, follows a long-tail distribution with many rare, atypical, or previously unseen cases.
Out-of-distribution (OOD) data refers to inputs whose underlying distribution deviates from that of those seen in the training set.
For example, for a DNN trained to classify pathologies in frontal chest X-rays, unseen/OOD cases include lateral views or X-rays from entirely different anatomical regions or even images that are not radiographs at all.
In the above case, lateral views constitute the \textit{nearOOD} data (same modality with relatively smaller shift from training distribution), while non-radiographic images form the \textit{farOOD} data.
DNN models often produce overconfident erroneous predictions on these OOD samples, by assigning incorrect ID labels, and often with an unjustifiably high softmax probability \citep{yang2024oodsurvey, Wollek2024OODVoting}.
This poses significant safety and reliability risks, making the vulnerability of DNNs to OOD data one of their main limitations. 
Adding to this challenge is the class imbalance, often prevalent in real-world medical datasets, where a few common classes dominate the dataset while numerous rare classes form a long tail \citep{cui2019classbalancedloss, LITJENS201760}.
The frequent classes are typically referred to as \textit{head classes}, whereas the rare classes are known as \textit{tail classes}.
Long-tailed distributions bias learning towards head classes, affecting representation learning for infrequent classes, causing their features to be poorly separated and closer to decision boundaries \citep{cao2019learning, menon2021longtail}.
This leads to the model being unable to distinguish tail data classes from the true OOD data, causing frequent misclassifications of rare cases as OOD and vice versa.

To address this particular limitation, there has been growing interest in studying OOD detection under long-tailed distributions.
Recent methods introduce tailored contrastive objectives  \citep{wang2022partial, he2024longtailedoutofdistributiondetectionprioritizing}, logit-adjusted calibration \citep{miao2024out, wei2024EAT}, multi-outlier modelling \citep{GUHAROY2022102274, wei2024EAT}, hyperspherical embedding regularization \citep{he2024longtailedoutofdistributiondetectionprioritizing, openlongtailrecognition}, and distribution-aware reweighting schemes  \citep{wei2024EAT,he2024longtailedoutofdistributiondetectionprioritizing,miao2024out}, combined with auxiliary OOD data \citep{wang2022partial,miao2024out, wei2024EAT,he2024longtailedoutofdistributiondetectionprioritizing}, to jointly improve tail recognition and OOD robustness.
To assess the effectiveness of these approaches, existing efforts construct long-tailed variants of standard ID datasets and evaluate against a fixed set of OOD benchmarks.
While this strategy has contributed valuable insights, it has primarily focused on natural-image settings.
As a consequence, the selected OOD datasets do not reflect the diverse challenges present in medical imaging, where distribution shifts may arise from multiple causes including modality differences, variations in acquisition protocol, institutional heterogeneity, demographic diversity, the presence of rare or novel pathologies.

Motivated by these gaps, we propose a method for medical imaging ID classification with OOD detection, under long-tailed class distributions, \texttt{MARVEL} (\textbf{M}argin-\textbf{A}ware \textbf{R}obust \textbf{v}on Mises–Fisher \textbf{E}xpert \textbf{L}earning). At the core of \texttt{MARVEL} is a Nonlinear von Mises–Fisher (NvMF) classifier that generalises conventional vMF-likelihood formulations.
Prior work typically treats the vMF distribution as a class-conditional density and reduces its logits to scaled cosine similarities, which limits the decision boundary flexibility and prevents modelling of complex, heterogeneous class geometries.
In contrast, we reinterpret the vMF distribution through its exponential-family structure and define logits using changes in the log-partition function.
This perspective yields a principled formulation that induces non-linear decision boundaries, enabling richer and more flexible separation between classes.
However, enhancing decision boundaries alone does not fully mitigate the difficulties imposed by long-tailed imbalance. 
Even with non-linear boundaries, rare classes continue to suffer from overlapping feature clusters and insufficient decision margins.
To counter this, we introduce a margin-aware multi-expert architecture comprising three variants (experts) of our proposed classifier, each trained under distinct prior assumptions: head-biased, balanced, and tail-biased.
Each expert specialises in a different region of the label distribution, effectively capturing the heterogeneity induced by severe imbalance.
In addition, to explicitly model abnormal, unseen, or clinically irrelevant inputs, we incorporate a dedicated outlier expert trained solely to discriminate OOD from ID samples.

Beyond architectural improvements, we also reexamine how OOD detection is evaluated in medical imaging.
Existing benchmarks often suffer from two key limitations: (1) medical imaging datasets typically restrict evaluation to nearOOD scopes, such as only holding out novel classes, which overlooks broader distribution-shift scenarios; and (2) natural image benchmarks frequently rely on coarse-grained OOD datasets like SVHN \citep{Netzer2011}, LSUN \citep{yu15lsun}, and INAT \citep{inat2024}, which fail to reflect the complexity, variability, and multimodal nature of OOD data from medical imaging.
To address these limitations, we assemble a diverse collection of OOD datasets spanning novel classes, corruption-based shifts, and farOOD natural images, establishing one of the first benchmarks that systematically evaluates medical-imaging models across both near and broad distribution shifts. 

In summary, our work contributes both methodological advances and comprehensive evaluation framework for medical imaging OOD detection as follows:

\begin{itemize}

\item \textbf{Nonlinear vMF classifier:} By leveraging the exponential family structure of the vMF distribution, we derive a novel Nonlinear vMF-based classifier that admits a more adaptive and complex decision boundaries in hyperspherical representation spaces, and theoretically show that it converges to a cosine classifier in the asymptotic regime.

\item \textbf{Margin-aware multi-expert learning strategy:} We develop a multi-expert ensemble composed of margin-aware NvMF classifiers that specialise across different regions of the label frequency distribution.

\item \textbf{Dedicated outlier expert for explicit OOD modelling:} We design a dedicated outlier expert that is trained to explicitly separate OOD samples from ID inputs, enhancing robustness to unseen and anomalous cases.

\item \textbf{Clinically realistic OOD evaluation: } We introduce an OOD evaluation framework for medical imaging that captures both semantic and domain-level shifts, going beyond prior benchmarks that rely on artificially constructed OOD datasets (not representative of real-world OOD cases), to reflect clinically relevant failure modes. 

\item \textbf{Evaluation on diverse medical datasets:} We conduct extensive experiments on multi-modal and multi-organ datasets including NCTCRC for histopathology, ISIC2019 for dermatology, and RFMiD for retinal imaging.

\item \textbf{Comprehensive empirical validation:} We conduct ablation studies on the proposed methodology, complemented by calibration, risk-coverage, and qualitative analyses, to rigorously substantiate the effectiveness and robustness of the proposed method.

\end{itemize}

\section{Related Work}
\label{relwork}
OOD detection has been so far explored in generic computer vision and machine learning, spanning diverse assumptions, methodologies, and evaluation protocols.
However, relatively little attention has been paid to identifying approaches that remain effective in long-tailed medical imaging settings.
To provide a coherent context for our framework, we first summarise the core developments in classification-based OOD detection (the setting most directly aligned with our problem formulation) before discussing recent efforts that explicitly address distribution imbalance and long-tailed regimes.

It is important to note that OOD detection also encompasses several specialised techniques, including
generative-model-based approaches for detecting pixel-level anomalies or spatially localised irregularities, as well as methods tailored for object detection or segmentation pipelines.
While effective in their respective applications, such approaches target different problem formulations from the image-level classification setting considered here, and are therefore excluded from our review.

\subsection{Out-of-Distribution Detection}
Existing approaches can be broadly categorised into \textit{post-hoc} methods, which operate on pretrained models at inference, and \textit{training-time} methods, which explicitly shape representations or decision boundaries during learning.

\textbf{Post-hoc Methods:}
Post-hoc approaches derive OOD scores from a fixed pretrained model without modifying its parameters.
A large class of methods relies on classifier outputs to estimate confidence.
Early work such as Maximum Softmax Probability (MSP) \citep{hendrycks17baseline} uses the maximum softmax score as a proxy for confidence, while Maximum Logits Score (MLS) \citep{vaze2022openset} operates directly on raw logits to provide a more stable signal.
Energy-based methods \citep{liu2020energy} further improve this by using exponentiated logits, summed across classes, to capture the overall evidence distribution.
Subsequent works enhance score-based methods via calibration and perturbation strategies.
For instance, ODIN \citep{liang2018enhancing} introduces temperature scaling and input perturbations to amplify the separation between ID and OOD samples, while its generalised variant G-ODIN \citep{Hsu_2020_CVPR} extend this idea to broader settings.
Further, methods such as ViM \citep{haoqi2022vim} augment the logit space with a virtual OOD component derived from feature residuals, bridging output-based and feature-based perspectives.
Beyond output scores, another line of work focuses on modifying intermediate activations to improve OOD separability.
Activation shaping methods such as REACT \citep{sun2021react} truncate overly large activations, while ASH \citep{djurisic2023extremely}  prune activations to suppress spurious high responses.
Similarly, DICE \citep{sun2022dice} and SCALE \citep{xu2024scaling} selectively retain or rescale salient activations, leading to improved discrimination without retraining.
These methods are lightweight and model-agnostic, making them attractive for practical deployment.

Another class of post-hoc methods are distance-based detectors, which leverage feature embeddings extracted from the penultimate layer of deep encoders.
These methods assume that the learned embedding space exhibits sufficient structure to separate ID and OOD samples.
Under this assumption, Mahalanobis Distance (MD) based detector \citep{NEURIPS2018_abdeb6f5,sehwag2021ssd}, models class-conditional feature distributions as Gaussians and computes the Mahalanobis distance between a test sample and the estimated distribution, treating samples that lie far from all class centres as potential OOD inputs. 
Similarly, k-th nearest-neighbour based method \citep{sun2022knnood} measures the distance between a test sample and its k-th nearest neighbour in the feature space, assuming that ID samples will lie close to previously observed representations while OOD samples will be more isolated.
Hybrid approaches such as NN-Guide \citep{Park_2023_ICCV} combine the classifier scores and feature distances to try and improve both near- and farOOD detection.

\textbf{Training-time Methods:}
In contrast to post-hoc approaches, training-time methods aim to learn representations that are inherently more amenable to OOD detection.

A prominent class of these methods leverages contrastive learning to enforce structured feature spaces.
SSD \citep{sehwag2021ssd} employs supervised contrastive objectives to promote intra-class compactness and inter-class separation, improving downstream distance-based detection.
CIDER \citep{ming2023cider} extends this idea by modelling normalised features on a hypersphere using the vMF distribution, encouraging concentration around class-specific directions.
Building further, PALM \citep{lu2024learning} relaxes the single-prototype assumption by learning multiple prototypes per class, enabling better modelling of intra-class variability while preserving discriminative boundaries.
Another widely explored direction incorporates auxiliary outlier data during training.
Outlier Exposure (OE) \citep{hendrycks2019oe} trains models to assign low confidence to samples drawn from external datasets, with the expectation that this behaviour generalises to unseen OOD inputs.
Several methods have adopted this principle, incorporating auxiliary outlier data into the training process to enhance OOD detection \citep{Wollek2024OODVoting, Zhang_2023_WACV, Yang_2021_ICCV, pmlr-v162-ming22a}.
Another line of work extends this idea by synthesising outliers during training.
For example, methods such as VOS \citep{du2022vos} and NPOS \citep{tao2023nonparametric} generate virtual outliers to improve robustness without requiring curated external data.
Collectively, these approaches promote structured and discriminative feature representations, which in turn improve the OOD discrimination of post-hoc detectors.

Despite their effectiveness, existing post-hoc and training-time approaches exhibit fundamental limitations in long-tailed settings due to inherent biases in both representation learning and decision mechanisms.
Confidence-based methods suffer from skewed classifier outputs, where models trained on imbalanced data tend to be overconfident on head classes while producing unreliable scores for tail classes, often causing tail samples to be misidentified as OOD.
Activation shaping methods, while effective in suppressing spurious high activations, rely on global heuristics that are agnostic to class structure, and may inadvertently remove informative but low-magnitude signals from underrepresented classes, thereby degrading discrimination for tail samples.
Distance-based approaches face a more structural issue, as tail class representations are typically sparse, anisotropic, and biased toward head-class directions, making distance estimates unstable and less discriminative.
Although contrastive learning methods improve global feature separability, they are driven by the empirical data distribution and thus disproportionately favour head classes, failing to enforce balanced representation quality across the label space.
Similarly, outlier exposure introduces competing objectives that can divert model capacity toward separating ID from OOD at the expense of refining fine-grained boundaries for underrepresented classes.
More broadly, these methods treat OOD detection and class imbalance as independent challenges, this mismatch highlights the need for approaches that explicitly account for the interplay between imbalance and distributional uncertainty.

\subsection{Long-Tailed Out-of-Distribution Detection}

An early step toward long-tailed OOD detection is Open Long Tail Recognition (OLTR) \citep{openlongtailrecognition}, which employs a meta-embedding framework to model tail-class prototypes and reject samples that deviate from them.
The Hierarchical Outlier Detection (HOD) loss \citep{GUHAROY2022102274}, in contrast, treats tail classes as surrogate outliers to enforce inlier–outlier separation.
Similarly, \citet{miccai-ood-long-tail} address long-tailed and fine-grained OOD detection using a combination of mixup strategies and prototype learning within a skin-lesion setting.
Consequently, all approaches reduce OOD detection to a form of novel-class recognition, limiting their ability to capture broader range of distributional shifts.

Addressing both long-tailed recognition and broader distributional shifts, recent methods commonly incorporate auxiliary datasets and adapted outlier exposure mechanisms tailored for imbalanced settings.
PASCL \citep{wang2022partial} improves tail representation learning by modifying the SupCon \citep{khosla2020supervised} loss to operate selectively on tail classes and auxiliary OOD samples, encouraging better separation for underrepresented classes.
A different perspective is taken by COCL \citep{miao2024out}, which reformulates OOD detection as a classification task by introducing a dedicated outlier class and coupling it with logit adjustment to handle class imbalance.
Extending this formulation, EAT \citep{wei2024EAT} models OOD data using multiple outlier classes and leverages ensemble training to capture diverse outlier patterns.
In contrast, PATT \citep{he2024longtailedoutofdistributiondetectionprioritizing} focuses on modelling feature distributions, augmented with tail-aware attention and feature calibration to improve the separation between ID and OOD samples.

Despite recent progress, existing methods remain limited both in design and evaluation.
Methodologically, many approaches rely on simple linear classifiers that are misaligned with the complex geometry of learned feature spaces, while a single shared model must simultaneously capture dense head classes and sparse tail classes, leading to head-dominated representations and underfitting of tail regions.
Moreover, OOD detection is often implicitly coupled with classification, confusing true distributional uncertainty with data scarcity and resulting in ambiguous decision boundaries where tail samples resemble OOD.
From an evaluation standpoint, these methods are predominantly validated on natural image benchmarks such as CIFAR \citep{krizhevsky2009learning} and ImageNet \citep{ILSVRC15}, which differ significantly from medical imaging in structure and acquisition.
In addition, commonly used OOD benchmarks, including SVHN \citep{Netzer2011}, LSUN \citep{yu15lsun}, and INAT \citep{inat2024}, are relatively coarse-grained and fail to reflect the complexity and heterogeneity of real-world medical data distributions.

Motivated by these gaps, we propose a framework that generalises long-tailed OOD detection to complex, domain-specific distributions in medical imaging.
We introduce a novel classifier capable of modelling heterogeneous class structures in hyperspherical feature spaces.
To mitigate head-class-dominated representations, we further adopt a multi-expert framework that captures diverse label distribution regimes.
Finally, we decouple OOD detection from classification through a dedicated outlier expert, enabling explicit modelling of inlier–outlier separation.
To ground our approach, we first formalise the problem setup and notation.

\section{Preliminaries}
\label{sec_preliminaries}

\subsection{Problem Setup}
Let $\mathcal{X} \subset \mathbb{R}^v$ denote the image space of dimension $v$ and $\mathcal{Y}^{in} := {1, 2, \dots, K}$ be the number of ID classes, denoted by set $K$.
The training dataset $\mathcal{D}^{\text{train}} = \{(\mathbf{x}_i, y_i)\}_{i=1}^N$ consists of independent samples $(\mathbf{x}_i, y_i)$ drawn from the joint ID distribution $P_{\mathcal{X}\mathcal{Y}^{in}}$. Class probabilities follow a long-tailed prior, $P(Y=k) \propto k^{-\alpha}$ for some $\alpha > 0$.

In addition to the ID dataset, we assume access to an auxiliary outlier dataset $\mathcal{D}^{\text{aux}} = \{\mathbf{\tilde{x}_i} \}_{i=1}^N $, consisting of images that do not belong to the ID image space.
These samples are randomly collected and incorporated during training as additional targets in the loss function.
A detailed description of the auxiliary dataset is provided in Section \ref{sssec:aux_dataset_details}.

\textbf{Objective:} During inference, the model should perform two tasks: (1) binary OOD detection and (2) label prediction for samples deemed ID.
The OOD detection is performed by the function $S_{\text{OOD}}(\textbf{x})$,which assigns a scalar-value indicating how likely $\mathbf{x}$ is to be OOD.
A decision threshold $\lambda$ is then applied ot determine whether a sample is classified ID or OOD.
For samples classified as ID, the predicted label is determined by selecting the ID class with highest predicted probability value.

\subsection{von Mises-Fisher (vMF) distribution}
In long-tailed classification settings, learned feature norms often correlate with class frequency, causing classifiers that depend on vector magnitude to favour head classes \citep{menon2021longtail}.
A common mitigation strategy is to normalise embeddings onto a unit hypersphere, thereby enforcing emphasis on angular discrimination. 
In this normalised regime, the vMF distribution provides a well-suited probabilistic model for class-conditional densities defined on the hypersphere.

The vMF distribution is defined over the unit hypersphere $\mathbb{S}^{d-1} = \{\mathbf{x} \in \mathbb{R}^d : \| \mathbf{x} \|_2 = 1\}$,
where $d$ denotes the dimensionality of the embedding space produced by the encoder $f(\cdot)$. Its probability density function is given by
\begin{equation}
    p(\mathbf{x} \mid \boldsymbol{\mu}, \kappa)
    = C_d(\kappa)\exp\left(\kappa \boldsymbol{\mu}^{\top}\mathbf{x}\right),
\end{equation}
where $\boldsymbol{\mu} \in \mathbb{S}^{d-1}$ is the mean direction, $\kappa \geq 0$ is a concentration parameter controlling dispersion around $\boldsymbol{\mu}$, and $C_d(\kappa)$ is a normalization constant (which ensures that the density integrates to 1 over the hypersphere) given by,
\begin{equation}
    C_d(\kappa) =
    \frac{\kappa^{d/2-1}}{(2\pi)^{d/2} I_{d/2-1}(\kappa)}
\label{eq:vmf_norm}
\end{equation}
$C_d(\kappa)$ involves the modified Bessel function of the first kind $I_{\nu}(\cdot)$, where $\nu = d/2 -1$ is the order of the Bessel function, which arises from integrating the exponential term over the spherical manifold.
This dependence reflects the geometry of the hypersphere and distinguishes the vMF distribution from Gaussian densities defined in Euclidean space.

In prior work, vMF distributions have been restricted mainly to this formulation for constructing classifiers, where each ID class $c$ with a single mean direction $\boldsymbol{\mu}_c$ on the hypersphere and computing class posteriors via a softmax over the corresponding log-densities~\citep{ming2023cider, lu2024learning}.
Under this formulation, classification reduces to comparing the angular similarity between an input embedding and class prototypes, yielding the following posterior:
\begin{equation}
P(y=c \mid \mathbf{x}) =
\frac{\exp(\kappa \boldsymbol{\mu}_c^\top \mathbf{x})}
{\sum_{j=1}^K \exp(\kappa \boldsymbol{\mu}_j^\top \mathbf{x})}.
\end{equation}
When optimised via maximum likelihood, the resulting logit for class $c$ simplifies to a scaled cosine similarity, $\ell_c(\mathbf{x}) = \kappa \boldsymbol{\mu}_c^\top \mathbf{x}$.
As a consequence, conventional vMF-based classifiers are functionally equivalent to cosine classifiers, differing primarily in their probabilistic interpretation rather than their decision geometry.
While this formulation provides a principled directional model and naturally accommodates normalised embeddings, it also inherits the limitations of cosine classifiers: decision boundaries remain linear in angular space, and each class is represented by a single mode.
These assumptions can be overly restrictive in practice, particularly in long-tailed and open-set settings where classes exhibit heterogeneous intra-class variability and complex feature distributions.

Our work, on the other hand, builds upon this widely adopted vMF classifier formulation and extends it beyond this linear-boundary regime.
The proposed adaptation of the NvMF classifier is introduced in Section~\ref{subsec_vmf_classifier}, where we describe how richer class representations and improved OOD separation are achieved.

\section{Methodology}
\label{sec_methodology}

Our proposed framework \texttt{MARVEL}, the illustrated in figure \ref{fig:framework}, consists of two components: (1) a shared feature extractor $f : \mathcal{X} \mapsto \mathbb{R}^d$ that maps input $\mathbf{x} \in \mathcal{X}$ to high dimensional feature embedding $f(\mathbf{x})$; (2) an ensemble of experts $\mathcal{E}$. Each expert $g \in \mathcal{E}$ operates on the shared representation $f(\mathbf{x})$. 

The ensemble $\mathcal{E}$ is composed of two categories of experts. 
The first category consists of our proposed NvMF classifiers with margin awareness, denoted by $g^{NvMF}_{\tau}: \mathbb{R}^d \mapsto \mathbb{R}^{K+1}$, where $\tau \in \{0, 1, 2\}$ is the margin strength parameter (explained in Section~\ref{ssec:margin_adjusted_ensemble}), that specifies the subset of classes emphasised by the expert.
The second category is a dedicated outlier expert, denoted by $g^{\text{out}}: \mathbb{R}^d \mapsto \mathbb{R}^2$, implemented as a fully connected layer trained for binary classification.
Unlike the $K+1$ classifiers in the first category, this expert does not model fine-grained ID classes, but focuses solely on the ID/OOD decision.

During training, mini-batches are constructed by sampling from both $\mathcal{D}^{\text{train}}$ and $\mathcal{D}^{\text{aux}}$ equally.
For the NvMF experts, samples from $\mathcal{D}^{\text{train}}$ retain their original class labels $y \in \{1, \dots, K\}$, while samples from $\mathcal{D}^{\text{aux}}$ are assigned to the additional $(K+1)^{\text{th}}$ class.
For the outlier expert, samples from $\mathcal{D}^{\text{train}}$ are assigned the label $0$ (ID), irrespective of their class identity, whereas samples from $\mathcal{D}^{\text{aux}}$ are assigned the label $1$ (OOD).
Given these label assignments, each expert optimises its corresponding task-specific objective.
The NvMF classifiers are trained using losses $\mathcal{L}_{\text{NvMF}}^{\tau}$, while the outlier expert is trained with $\mathcal{L}{\text{out}}$.
The overall training objective jointly minimises the sum of these components:
\begin{equation}
\mathcal{L}_{\text{total}} = \sum_{\tau=\{0, 1, 2\}} \mathcal{L}^{\tau}_{NvMF} + \mathcal{L}_{out}
\end{equation}

\begin{figure*}
    \centering
    \includegraphics[width=\linewidth]{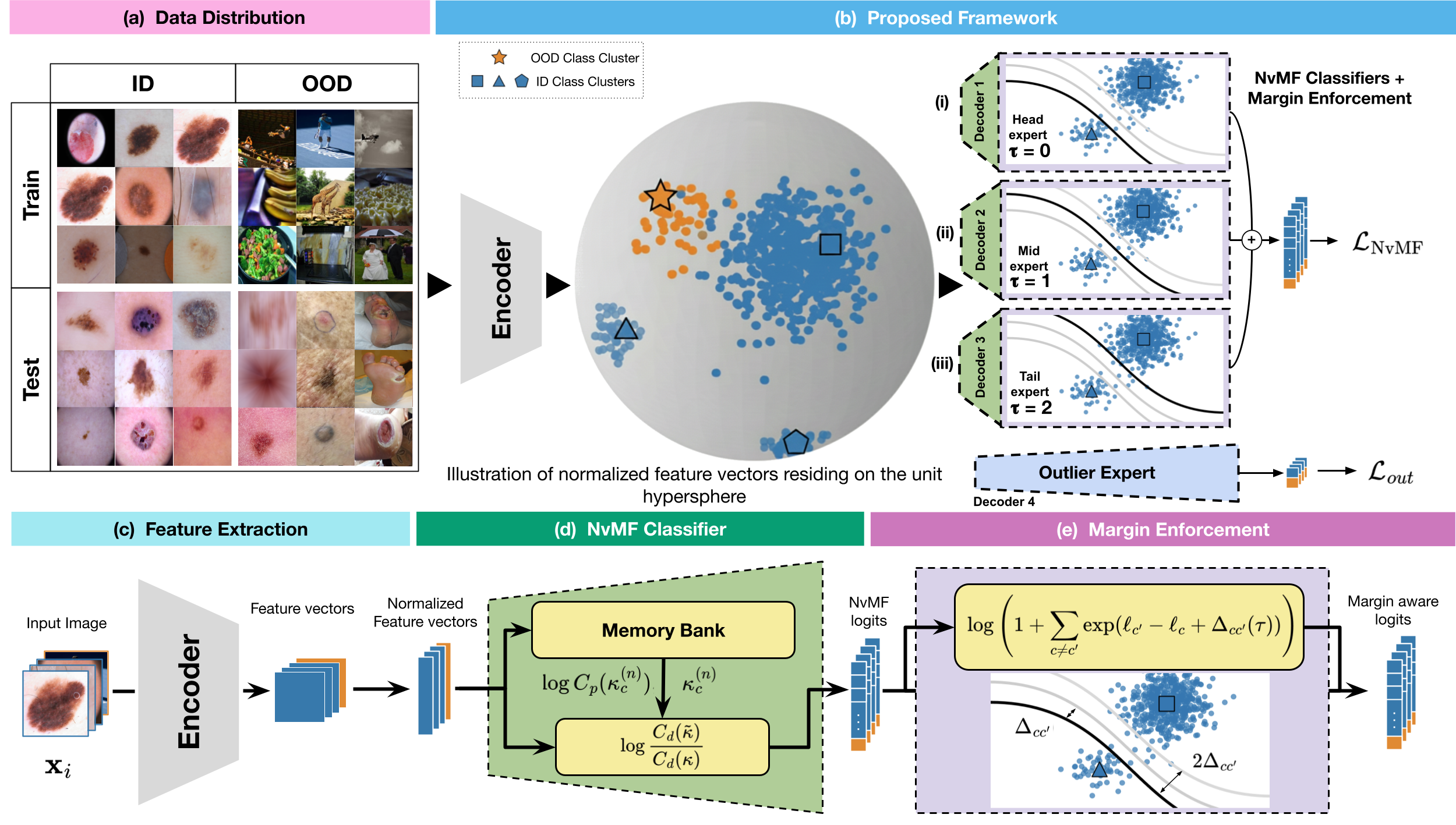}
    \caption{Overview of the proposed long-tailed OOD detection framework.
    (a) Schematic of the possible data distributions defining ID and OOD samples at training and test time.
    (b) High-level overview of the proposed framework: a shared encoder maps inputs to a normalised feature space, followed by specialised experts for head, mid, tail and outlier samples.
    (c) Feature extraction from the shared encoder, highlighting the inference path within the ID classification branch.
    (d) vMF classifier with a memory bank that maintains class-wise concentration parameters and normalization terms for logit computation.
    (e) Margin-aware mechanism controlled by parameter $\tau$, where different values of $\tau$ bias the decision boundary, enabling the classifier to specialise as a head-, mid-, or tail-class expert.}
    \label{fig:framework}
\end{figure*}

\subsection{Nonlinear von Mises-Fisher (NvMF) Classifier}
\label{subsec_vmf_classifier}

In \texttt{MARVEL}, the three experts in the ensemble are implemented as novel Nonlinear vMF classifiers (NvMF).
This design choice is motivated by the limitations of conventional hyperspherical classifiers. 
Cosine classifiers \citep{Gidaris_2018_CVPR} only encode the mean direction of each class and ignore the variability or concentration around it.
Prior work remedies this by modelling class-conditional densities using vMF distributions, as summarised in the preliminaries. 
However, because these models optimise the likelihood of a vMF density, the resulting logits reduce to scaled cosine similarity, meaning that their decision boundaries remain linear and inherit the same limitations as cosine classifiers.

Our approach takes a fundamentally different perspective: rather than treating the vMF distribution purely as a likelihood model, we exploit its exponential-family structure to construct class logits.
This reinterpretation yields a classifier, capable of inducing non-linear decision boundaries on the hypersphere, allowing it to better model heterogeneous cluster shapes and imbalanced embedding geometries. 
To make this connection precise, recall that the vMF distribution can be expressed as a member of the exponential family \citep[Section2.4]{10.5555/1162264}, with a sufficient statistic $\mathbf{T}(\mathbf{x}) = \mathbf{x}$, a natural parameter $\boldsymbol{\eta} = \kappa \boldsymbol{\mu}$, and a log-partition function $A(\boldsymbol{\eta}) = - \log C_d(\kappa)$. 

We model the distribution of normalised embeddings $\mathbf{x}$ as a mixture of class-conditional vMF components. Specifically, for a $(K+1)$-class problem:
\begin{equation}
    p(\mathbf{x}) = \sum_{c=1}^{K+1}w_c p(\mathbf{x}|c), \quad p(\mathbf{x}|c) = C_d(\kappa_c)\exp(\kappa_c\boldsymbol{\mu}_c^\top\mathbf{x}_c),
\end{equation}
where $w_c$ is the mixture weight, and \( (\boldsymbol{\mu}_c, \kappa_c) \) denote the mean direction and concentration of class \( c \), with \( \boldsymbol{\mu}_c \in \mathbb{S}^{d-1} \) and \( \kappa_c > 0 \). 
Each class-conditional component belongs to the exponential family and can be written in the canonical form, $p(\mathbf{x}|c) = \exp(\boldsymbol{\eta}_c^\top\mathbf{x} - A(\boldsymbol{\eta}_c))$. 
    
Since each class is modelled as a probability distribution $p(\mathbf{x}|c)$, the log-partition function $A(\boldsymbol{\eta}_c)$ serves as the necessary normalization term that scales the density to a valid probability. We can therefore interpret the classification task as evaluating how much this normalization constant must change to accommodate a new observation. If a point $\mathbf{x}$ aligns well with a class, the "cost" of including it in that distribution $p(\mathbf{x}|c)$ is low.
Thus, for any class $c$, we define the score $\ell_c= -\Delta A(\mathbf{x}) = A(\boldsymbol{\eta}_c) - A(\boldsymbol{\eta}_c + \mathbf{x})$, which measures how the log-partition function of the $c$-th component would change if $\mathbf{x}$ were attributed to that class.

Although the log-partition function acts as a global normalisation term, its class-conditional increments provide class-dependent compatibility scores on the hypersphere, inducing non-linear decision boundaries in the embedding space. We therefore use the collection of scores $\{l_c \}_{c=1}^{K+1}$ as logits in a discriminative classifier, defined as:
\begin{equation}
\ell_c = -\Delta A(\mathbf{x}) 
= - \log \frac{C_d(\|\kappa_c \boldsymbol{\mu}_c + \mathbf{x}\|)}{C_d(\kappa_c)}.
\label{eq:vmf_logit}
\end{equation}

In Theorem~\ref{th:logits}, we provide a theoretical justification for the proposed formulation by analysing the asymptotic regime of the vMF concentration parameter $\kappa$. Specifically, as $\kappa \to \infty$, the logits induced by our exponential-family construction converge to those used in existing vMF and cosine-based classifiers as proven in Theorem~\ref{th:logits}. 
\begin{theorem}
\label{th:logits}
Let $\mathbf{x},\boldsymbol{\mu}\in\mathbb{S}^{d-1}$. 
The vMF logit is defined as
\begin{equation}
\ell(\mathbf{x};\kappa,\boldsymbol{\mu}) 
= - \log \frac{C_{d}\!\big(\|\kappa\boldsymbol{\mu}+\mathbf{x}\|\big)}{C_{d}(\kappa)}.
\label{eqn:logit_form}
\end{equation}
As $\kappa \to \infty$, it converges to 
$
\boldsymbol{\mu}^\top\mathbf{x} + \mathcal{O}\!\big(\kappa^{-1}\big).
$
\end{theorem}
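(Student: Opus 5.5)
The plan is to combine two expansions. First, expand the norm $\|\kappa\boldsymbol{\mu}+\mathbf{x}\|$ in powers of $\kappa^{-1}$. Second, use the large-argument asymptotics of $\log C_d$, which come from the modified Bessel function $I_\nu$ appearing in \eqref{eq:vmf_norm}. Write $\nu=d/2-1$ and $\kappa' := \|\kappa\boldsymbol{\mu}+\mathbf{x}\|$. Then
\begin{equation*}
\ell = \log C_d(\kappa)-\log C_d(\kappa') = \nu\log\frac{\kappa}{\kappa'} + \log I_\nu(\kappa') - \log I_\nu(\kappa).
\end{equation*}
So the problem splits into two parts: the power-law prefactor, and the difference of log-Bessel terms evaluated at two nearby large arguments.

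\textbf{Step 1 (the norm).} Since $\|\boldsymbol{\mu}\|=\|\mathbf{x}\|=1$, we have $\kappa'^2=\kappa^2+2\kappa\boldsymbol{\mu}^\top\mathbf{x}+1$. A Taylor expansion of the square root gives
\begin{equation*}
\kappa'=\kappa+\boldsymbol{\mu}^\top\mathbf{x}+\frac{1-(\boldsymbol{\mu}^\top\mathbf{x})^2}{2\kappa}+\mathcal{O}(\kappa^{-2}).
\end{equation*}
This holds uniformly in $\mathbf{x},\boldsymbol{\mu}$ because $|\boldsymbol{\mu}^\top\mathbf{x}|\le 1$. It follows that $\kappa'-\kappa=\boldsymbol{\mu}^\top\mathbf{x}+\mathcal{O}(\kappa^{-1})$ and $\log(\kappa/\kappa')=-\boldsymbol{\mu}^\top\mathbf{x}/\kappa+\mathcal{O}(\kappa^{-2})=\mathcal{O}(\kappa^{-1})$. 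The prefactor term is therefore already absorbed into the error.

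\textbf{Step 2 (the Bessel ratio).} I will use the standard large-argument expansion
\begin{equation*}
I_\nu(z)=\frac{e^{z}}{\sqrt{2\pi z}}\Big(1-\frac{4\nu^2-1}{8z}+\mathcal{O}(z^{-2})\Big),
\end{equation*}
which gives $\log I_\nu(z)= z-\tfrac12\log(2\pi z)+\mathcal{O}(z^{-1})$. Applying this at $z=\kappa'$ and $z=\kappa$ and subtracting yields
\begin{equation*}
\log I_\nu(\kappa')-\log I_\nu(\kappa)=(\kappa'-\kappa)-\tfrac12\log(\kappa'/\kappa)+\mathcal{O}(\kappa^{-1}).
\end{equation*}
By Step 1, this equals $\boldsymbol{\mu}^\top\mathbf{x}+\mathcal{O}(\kappa^{-1})$. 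Adding the prefactor term gives $\ell=\boldsymbol{\mu}^\top\mathbf{x}+\mathcal{O}(\kappa^{-1})$, as claimed.

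\textbf{Main obstacle.} The delicate point is controlling the error terms of the Bessel expansion uniformly. Each $\mathcal{O}(z^{-1})$ remainder in $\log I_\nu$ is only $\mathcal{O}(\kappa^{-1})$ individually, and their difference needs to be no worse than that. Since $\kappa'\ge\kappa-1$, both arguments are comparable to $\kappa$, so the two $\mathcal{O}(\kappa^{-1})$ remainders simply add and the bound suffices.

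For a cleaner, rigorous alternative, I would apply the mean value theorem to $\log I_\nu$. The derivative of $\log I_\nu(z)$ is the ratio $I_\nu'(z)/I_\nu(z)=I_{\nu+1}(z)/I_\nu(z)+\nu/z$. The classical bound $I_{\nu+1}(z)/I_\nu(z)=1-\frac{2\nu+1}{2z}+\mathcal{O}(z^{-2})$ shows that this derivative is $1+\mathcal{O}(z^{-1})$ on the interval between $\kappa$ and $\kappa'$. Multiplying by $\kappa'-\kappa=\mathcal{O}(1)$ then gives the result directly. This route avoids expanding $I_\nu$ itself.
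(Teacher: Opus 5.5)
Your proposal is correct and follows essentially the same route as the paper: you use the large-argument expansion of $I_\nu$ to control $\log C_d$, expand $\|\kappa\boldsymbol{\mu}+\mathbf{x}\|=\kappa+\rho+\frac{1-\rho^2}{2\kappa}+\mathcal{O}(\kappa^{-2})$, then substitute and subtract. The paper carries both expansions one order further to get the explicit correction $\ell=\rho-\frac{1}{2\kappa}(\rho^2+(d-1)\rho-1)+\mathcal{O}(\kappa^{-2})$, whereas you stop at leading order (all the stated $\mathcal{O}(\kappa^{-1})$ claim needs) but also check uniformity in $\rho$ via $\kappa'\ge\kappa-1$; your mean-value-theorem variant with $\frac{d}{dz}\log I_\nu(z)=I_{\nu+1}(z)/I_\nu(z)+\nu/z$ is a valid alternative that avoids expanding $I_\nu$ itself.
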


\begin{proof}
We analyse the asymptotic behaviour of the vMF logit as the concentration parameter $\kappa \to \infty$. The normalization constant $C_d(\kappa)$ depends on the modified Bessel function of the first kind $I_\nu(\kappa)$ with $\nu = d/2-1$ (refer to~\eqref{eq:vmf_norm}). Using its standard asymptotic expansion \citep{NIST:DLMF:10.40},

\begin{equation}
I_\nu(\kappa) \sim \frac{e^{\kappa}}{\sqrt{2\pi \kappa}}
\left(1 - \frac{4\nu^{2}-1}{8\kappa} + \mathcal{O}(\kappa^{-2})\right),
\quad \kappa \to \infty,
\label{eq:asymptomatic_expansion}
\end{equation}

Taking logarithms on both sides of~\eqref{eq:vmf_norm} and substituting the asymptotic expansion of $I_\nu(\kappa)$ from ~\eqref{eq:asymptomatic_expansion} yields

\begin{equation}
\log C_d(\kappa) = -\kappa + \tfrac{d-1}{2}\log\kappa + a_0 + \frac{a_1}{\kappa} 
+ \mathcal{O}(\kappa^{-2}),
\label{eq:log_C_d_kappa}
\end{equation}
for constants $a_0,a_1$ depending only on $d$. 

Next, letting $\rho = \boldsymbol{\mu}^\top\mathbf{x}$, we expand the term $\|\kappa \boldsymbol{\mu} + \mathbf{x}\|$ from~\eqref{eqn:logit_form}, 

\begin{equation}
\|\kappa \boldsymbol{\mu} + \mathbf{x}\|
= \sqrt{\kappa^2 + 2\kappa \rho + 1}
= \kappa + \rho + \frac{1-\rho^2}{2\kappa} + \mathcal{O}(\kappa^{-2})
\end{equation}

We now evaluate the asymptotic form of $\log C_d(r)$ at $r = \|\kappa \boldsymbol{\mu} + \mathbf{x}\|$ by substituting the above expansion in~\eqref{eq:log_C_d_kappa} and collecting terms up to order $\kappa^{-2}$, yielding

\begin{equation}
\log C_d(r) = -\kappa - \rho - \tfrac{1-\rho^{2}}{2\kappa} 
+ \tfrac{d-1}{2}\log\kappa + \tfrac{d-1}{2}\tfrac{\rho}{\kappa} 
+ a_0 + \mathcal{O}(\kappa^{-2})
\label{eq:log_C_d_r}
\end{equation}

Finally, to obtain~\eqref{eqn:logit_form}, we subtract~\eqref{eq:log_C_d_r} from~\eqref{eq:log_C_d_kappa}

\begin{equation}
\begin{split}
\ell(\mathbf{x};\kappa,\boldsymbol{\mu})
&= \log C_d(\kappa) - \log C_d(r) \\
&= \rho - \frac{1}{2\kappa}\big(-1 + \rho^2 + (d-1)\rho \big) 
  + \mathcal{O}(\kappa^{-2}).
\end{split}
\label{eqn:delta_A}
\end{equation}

Thus, in~\eqref{eqn:delta_A}, as $\kappa \to \infty$, the second term becomes 0, leaving the leading-order term $\rho$, which is $\boldsymbol{\mu}^\top\mathbf{x}$ (defined earlier), showing that the vMF logit converges to cosine similarity as $\kappa \to \infty$.
\end{proof}

This shows that the proposed vMF-based classifier recovers cosine-similarity–based classifiers as a limiting special case. Consequently, this formulation constitutes a principled generalisation of existing hyperspherical classifiers. While the large-
$\kappa$ regime yields linear decision boundaries on the sphere, smaller values of $\kappa$ induce non-linear decision boundaries.
The classifier is trained by applying the softmax over the logits $\ell_c$ and minimising the cross-entropy loss with respect to the ground truth class $y$:
\begin{equation}
    \mathcal{L}_{NvMF} = - \log \frac{\exp\ell_y}{\sum_{c=1}^{K+1} \exp \ell_c}
    = -\ell_y + \log \sum_{c=1}^{K+1} \exp \ell_c
\end{equation}

\subsection{Margin-aware ensemble of experts}
\label{ssec:margin_adjusted_ensemble}

With our proposed NvMF formulation, which can effectively capture non-linear decision boundaries, we aim to overcome the effect of class imbalance in data that tend to overfit to head classes while systematically underperforming on rare ones.
A key contributing factor is that decision boundaries are often placed too close to tail-class samples, making them vulnerable to misclassification.
One natural way to alleviate this issue is to enforce large inter-class margins, requiring the classifier to shift the decision boundaries towards head class and offer greater protection to tail class~\citep{cao2019learning, menon2021longtail}. 
We can enforce these margins by introducing pairwise margin objectives in the loss function, where a margin term $\Delta_{yc}$ modifies the pairwise logit difference between the ground-truth class $y$ and competing classes $c$. Thus, our loss function $\mathcal{L}_{NvMF}$ after introducing the objective becomes:
\begin{equation}
\begin{split}
\mathcal{L}_{NvMF}^{\Delta} 
&= -(\ell_y + \Delta_{yy}) + \log \sum_{c=1}^{K+1} \exp(\ell_{c} + \Delta_{yc})\\
&= \log \left( 1 + \sum_{c=1, c \neq y}^{K+1} \exp(\ell_{c} - \ell_{y} + \Delta_{yc} )\right)
\end{split}
\end{equation}

Following \citet{menon2021longtail}, these margins can be parametrised as 
$
\Delta_{yc} = \tau \log \pi_{c} / \pi_y
$
where $\pi_y,\pi_{c}$ are the empirical class priors computed from the training set, i.e., $\pi_y= n_y / \sum_{k} n_k$, where $n_y$ represents the number of training samples belonging to the class $y$, and $\tau \geq 0$ is the margin strength parameter that controls margin's intensity.
This formulation is inherently asymmetric, i.e., $\Delta_{yc} \neq \Delta_{cy}$.
For instance, if $y$ is a head class and $c$ is a tail class, then $\pi_{c} / \pi_{y} < 1$, implying $\log(\pi_{c} / \pi_{y}) < 0$, and thus $\Delta_{yc} < 0$ assuming $\tau > 0$. 
As a result, when comparing a head class against a tail class, $\Delta_{yc} < 0$, yielding a negative margin that penalises the head class.
Conversely, when comparing a tail class $y$ against a head class $c$, $\pi_{c} / \pi_{y} > 1$, implying $\log(\pi_{c} / \pi_y) > 0$ and hence $\Delta_{yc} > 0$, yielding a positive margin that increases the tail class logit.
Ultimately, this pairwise asymmetry shifts the decision boundaries towards under-represented class, thereby ameliorating the impact of imbalance.

The role of the margin strength parameter $\tau$ can be understood as follows.
When $\tau=0$, $\Delta_{cc'} = 0$, as illustrated in Fig.~\ref{fig:framework}b(i), effectively applying no margin at all. Thus, the classifier reduces to the default head-biased classifier.
As $\tau$ increases, $\Delta_{cc'}$ increases correspondingly, resulting in progressively stronger margins that favour tail classes (see Fig.~\ref{fig:framework}b(ii)–(iii)).
Thus, by incorporating $\Delta_{yc}=\tau \log (\pi_c/\pi_y)$, the resulting loss becomes:
\begin{equation}
\mathcal{L}_{NvMF}^{\tau} = \log \left( 1 + \sum_{c \neq y} \exp(\ell_{c}^\tau - \ell_{y}^\tau + \tau \log \frac{\pi_{c}}{\pi_y} )\right)
\end{equation}
where $\ell_c^\tau$ denotes the logit for class $c$, with superscript $\tau$ indexing the expert.

The proposed ensemble utilises $\tau \in \{0, 1, 2\}$ to integrate three specialised regimes: a head-biased expert ($\tau=0$) for standard accuracy, a balanced expert ($\tau=1$) for moderate rebalancing, and a tail-biased expert ($\tau=2$) for aggressive minority-class protection. By combining these complementary behaviours, \texttt{MARVEL} maintains high performance on majority classes while significantly improving robustness across mid- and tail-categories.

\subsection{Outlier Expert}
\label{subsec_outlier_expert}
While the NvMF experts model both ID classes and an additional OOD class, the resulting decision boundaries are inherently tied to multi-class discrimination.
As a result, OOD separation is influenced by inter-class competition among ID categories.
To provide a more direct supervision signal for OOD detection, we introduce a dedicated outlier expert $g^{\text{out}}: \mathbb{R}^d \to \mathbb{R}^2$, trained to perform binary ID versus OOD classification on the shared representation $f(\mathbf{x})$.
This expert complements the NvMF ensemble by learning global decision boundary that explicitly separates ID and OOD samples.

Given the balanced presence of ID and OOD samples in each training batch, the binary classification task is well-conditioned and does not require complex decision boundaries.
Accordingly, the outlier expert is implemented as a simple fully connected layer \citep{alexnet}.
A standard cross-entropy loss is used for training over, where $\ell_c^{\text{ood}}$ denotes the logit corresponding to class $c \in \{0,1\}$, and $y \in \{0, 1\}$ is the ground truth label.
\begin{equation} 
\mathcal{L}_{\text{OOD}} = -\ell^{\text{ood}}_{y} + \log \left( \sum_{c\in\{0,1\}}\exp \ell^{\text{ood}}_{c} \right)
\end{equation}

\subsection{Inference}
\label{sec:inference}
Given an input sample, a forward pass through the proposed model produces the following logits.
The NvMF experts output a set of three logit vectors $\{\boldsymbol{\ell}^\tau\}_{\tau=0}^{2}$, where each $\boldsymbol{\ell}^\tau \in \mathbb{R}^{K+1}$ corresponds to the logits for $K+1$ classes.
In addition, the outlier expert produces $\boldsymbol{\ell}^{\text{ood}} \in \mathbb{R}^2$, representing the ID, OOD classes.
These logits are used differently for ID classification and OOD detection, as detailed in the following subsections.

\subsubsection{ID Classification}
\label{ssec:id_classification}
To perform ID classification, we utilise the predictions from the NvMF experts only.
Although each expert produces predictions over $K+1$ classes, we restrict our attention to the first $K$ logits.
For each expert $\tau \in \{0,1,2\}$, we compute the class probabilities over the ID classes as,
$
p_c^{\tau} = \mathrm{softmax}(\boldsymbol{\ell}^\tau_{1:K})c.
$
We then aggregate the predictions across all experts to obtain the final class probabilities:
\begin{equation}
p{c} =
\frac{1}{3} \sum_{\tau=0}^{2} p^{\tau}_{c}.
\end{equation}
The predicted class label is given by $\arg\max_c , p_c$.
This aggregation leverages the complementary strengths of the experts, each capturing different aspects of the label distribution.

\subsubsection{OOD Detection}
\label{subsec_ood_detection}

To perform OOD detection, we leverage two complementary signals: (i) the probability of a sample being assigned to $(K+1)$-th class by NvMF experts, and (ii) the OOD probability predicted by the dedicated Outlier Expert.

For each NvMF expert $\tau \in \{0,1,2\}$, we first compute the probability assigned to the $(K+1)$-th class as,
$
s_{\text{NvMF}}^{\tau} = \mathrm{softmax}(\boldsymbol{\ell}^\tau)_{K+1}.
$
We then aggregate the predictions across all experts to obtain a unified NvMF-based score:
\begin{equation}
s_{\text{NvMF}} = \frac{1}{3} \sum_{\tau=0}^{2} s_{\text{NvMF}}^{\tau}.
\end{equation}
In parallel, the Outlier Expert produces an explicit estimate of OOD likelihood, given by:
$
s_{\text{ood}} = \mathrm{softmax}(\boldsymbol{\ell}^{\text{ood}})_{1}.
$
Finally, we combine these two signals to obtain:
\begin{equation}
S_{\text{OOD}} = \frac{1}{2} \left( s_{\text{NvMF}} + s_{\text{ood}} \right).
\end{equation}
where, higher values of $S_{\text{OOD}}(\mathbf{x})$ indicate greater likelihood of the sample being OOD.
This formulation leverages the complementary strengths of (i) class-aware NvMF classifiers that contextualise OODness relative to ID decision boundaries, and (ii) a binary expert specialised in learning coarse ID/OOD separation. Together, they yield robust OOD scoring under both balanced and long-tailed settings.

\section{Experimental Setup}
\label{sec: Experimental Setup}

\subsection{Dataset details}
\label{ssec:dataset_details}

To comprehensively evaluate the proposed framework under realistic and challenging conditions, we adopt an extensive evaluation protocol that combines long-tailed ID datasets with carefully curated OOD datasets. For each ID dataset, we consider multiple OOD datasets that span a spectrum of semantic difficulty, ranging from nearOOD (hard) to farOOD (easy), enabling a fine-grained analysis of OOD detection performance. Figure~\ref{fig:ood-datasets-spectrum} provides an overview of the dataset composition and evaluation setting, illustrating representative ID samples, their corresponding near- and farOOD counterparts, as well as the long-tailed label distributions of the ID datasets.  

\subsubsection{ID Dataset Details}
\label{sssc:id_dataset_details}
We evaluate the proposed method on three publicly available medical imaging datasets: (i) the Retinal Fundus Multi-Disease Image Dataset (RFMiD)\footnote{\url{https://riadd.grand-challenge.org/download-all-classes/}} \citep{rfmid-dataset}, (ii) the International Skin Imaging Collaboration 2019 (ISIC2019) dataset\footnote{\url{https://challenge.isic-archive.com/data}} \citep{codella2017isic} and (iii) NCT Colorectal Caner (NCTCRC) dataset\footnote{\url{https://zenodo.org/records/1214456}} \citep{kather_2018_100k}.

ISIC2019 comprises dermatoscopic images of pigmented skin lesions spanning eight diagnostic categories, and was curated primarily to support automated skin lesion analysis and melanoma classification. 
NCTCRC is a histopathological image dataset for nine-class tissue classification, created to facilitate the development of computational models for colorectal cancer tissue characterization from hematoxylin and eosin (H\&E) stained whole-slide images. 
Finally, RFMiD consists of retinal fundus images annotated by experts for 46 ocular conditions, and was introduced to benchmark automated screening and diagnosis of a broad spectrum of retinal diseases, including rare pathologies, thereby forming a challenging long-tailed benchmark. 
For RFMiD, to conform to a strictly multi-class single-label classification setting (the scope of this study), we retain only samples with a single annotation and discard those with multiple labels.
To enable systematic evaluation of OOD detection, each dataset is partitioned at the class level, with a subset of semantic categories withheld from training and treated as OOD at test time. 
For all datasets, the train/test split and the ID/OOD classes are summarised in Table~\ref{tab:dataset_summary}. We use the standard dataset splits defined in the original benchmarks and adopted in prior work, ensuring that no data leakage occurs between the training and evaluation sets.

To reflect the severe class imbalance encountered in real-world medical imaging scenarios, we intentionally reshape the class distribution of the training data to follow a long-tailed regime.
Concretely, we model this imbalance using a Pareto distribution \citep{arnold2015pareto}, which provides a principled and widely used approximation of long-tailed class frequencies. 
We set the imbalance ratio $\rho = 50$, corresponding to a $50{:}1$ ratio between the sample counts of the most frequent (head) and least frequent (tail) classes. 
The resulting label distributions for all datasets are illustrated in Fig.~\ref{fig:ood-datasets-spectrum}(b).

\begin{center}
\captionof{table}{Summary of datasets used in this study. (A) Dataset statistics and OOD evaluation splits. (B) ID and novel OOD class labels.}
\label{tab:dataset_summary}

\resizebox{\linewidth}{!}{
\begin{tabular}{l cccccc}
\toprule
 & \multicolumn{6}{c}{\textbf{Summary of Datasets}} \\
\cmidrule(lr){2-7}
Dataset & Train & Val & Test & \#Novel & NearOOD1 & NearOOD2 \\
\midrule
\multicolumn{7}{l}{\textbf{A. Dataset splits \& statistics}} \\
\quad RFMiD    & 846 & 368 & 2810 & 113 & 256 & 968 \\
\quad ISIC2019 & 6585 & 3531 & 2396 & 3822 & 2657 & 2298 \\
\quad NCT-CRC  & 15555 & 7262 & 20748 & 9252 & 1695 & 49260 \\
\midrule
\multicolumn{7}{l}{\textbf{B. ID/OOD class labels}} \\

\quad RFMiD    & \multicolumn{6}{p{\linewidth}}{%
\textbf{ID:} Macular Hole (MH), Diabetic Retinopathy (DR), Optic Disc Cupping (ODC), 
Drusen (DN), Age-related Macular Degeneration (ARMD), Branch Retinal Vein Occlusion (BRVO), 
Optic Disc Edema (ODE), Myopia (MYA), Retinitis (RS), Central Serous Retinopathy (CSR), 
Central Retinal Vein Occlusion (CRVO), Optic Disc Pit (ODP), Retinal Tear (RT), 
Epiretinal Membrane (EDN), Macular Hole with Lamellar features (MHL), Macular Scar (MS)

\textbf{OOD:} Tessellated Fundus (TSLN), Lens Subluxation (LS), Strabismus (ST), 
Retinitis Pigmentosa (RP), Cotton Wool Spots (CWS), Choroidal Bands (CB), 
Optic Disc Pit Maculopathy (ODPM), Preretinal Hemorrhage (PRH), Myelinated Nerve Fibers (MNF), 
Hypertensive Retinopathy (HR), Central Retinal Artery Occlusion (CRAO), 
Tilted Disc (TD), Cystoid Macular Edema (CME), Post-traumatic Chorioretinopathy (PTCR), 
Chorioretinal Fold (CF), Vitreous Hemorrhage (VH), Macroaneurysm (MCA), 
Vascular Sheathing (VS), Branch Retinal Artery Occlusion (BRAO), 
Plaque (PLQ), Hemorrhagic PED (HPED), Choroiditis Lesion (CL), 
Tortuous Vessels (TV), Papilledema (PT)} \\

\quad ISIC2019 & \multicolumn{6}{p{\linewidth}}{%
\textbf{ID:} Melanoma (MEL), Basal Cell Carcinoma (BCC), Benign Keratosis (BKL), 
Actinic Keratosis (AK), Dermatofibroma (DF), Vascular Lesion (VASC)

\textbf{OOD:} Melanocytic Nevus (NV), Melanoma (MEL)} \\

\quad NCT-CRC  & \multicolumn{6}{p{\linewidth}}{%
\textbf{ID:} Tumor Epithelium (TUM), Muscle (MUS), Lymphocytes (LYM), 
Stroma (STR), Adipose Tissue (ADI), Mucus (MUC)

\textbf{OOD:} Background (BACK), Debris (DEB), Normal Colon Mucosa (NORM)} \\
\bottomrule
\end{tabular}
}
\end{center}

\subsubsection{OOD Dataset Details}
\label{sssec:ood_dataset_details}
For OOD detection, we evaluate along a spectrum of OOD scenarios that range from challenging nearOOD cases to relatively easier farOOD cases as described below:

At the most challenging end of the spectrum are \emph{novel-classes}, where test samples originate from the same underlying data distribution as the training set but belong to semantic classes that are entirely unseen during training. To simulate this scenario, we withhold a subset of classes from the training data and treat samples from these classes as OOD at evaluation time.

The next level considers \emph{corruption-based} OOD settings, where test images are generated by applying synthetic acquisition and processing degradations to in-distribution samples. Specifically, we simulate common real-world corruptions using the Albumentations library \citep{2018arXiv180906839B}, including Gaussian ($\sigma \in$ [0.2, 0.3]) and ISO noise (color shift $\in$ [0.1, 0.5], intensity $\in$ [1.0, 2.0]), motion (kernel size $\leq 121$ pixels) and zoom blur (magnification factor $\leq 5$), illumination artifacts through Random Sun Flare (source radius $=500$ pixels), compression artifacts (quality $\in$ [1, 10]), resolution degradation (scale $\in$ [0.02, 0.10]), and structural perturbations such as pixel (pixel dropout $=50\%$) and grid dropout (grid ratio $=0.75$). All corruption procedures are implemented in a reproducible script released as part of our codebase.\footnote{Provided in \url{https://github.com/redboxup/MARVEL}.}
    
At the third level, we consider \textit{distribution-shifted} OOD settings, where OOD samples arise from changes in imaging modality, acquisition setup, or anatomical focus. For each ID dataset, we select external datasets that induce progressively stronger distribution shifts, denoted as NearOOD1 and NearOOD2: 
\begin{enumerate}
    \item For ISIC2019, PAD-UFES \citep{PACHECO2020106221} comprises skin lesion images captured using smartphone cameras, while DFU \citep{Cassidy2021DFUC} consists of images of diabetic foot ulcers. We treat PAD-UFES as NearOOD1, representing a moderate distribution shift due to the change in acquisition device while retaining similar skin cancer pathologies, whereas DFU is designated as NearOOD2, reflecting a more substantial shift that involves both a different anatomical region and distinct semantic categories.
    \item For NCTCRC, the MIHIC \citep{Wang2024MIHIC} dataset comprises multiplex immunohistochemically (IHC) stained histological image patches from lung cancer tissue. In contrast, the cervical cytology dataset \citep{phoulady2018newcervicalcytologydataset} consists of Pap smear slide images for precancerous and cancerous stages. We treat MIHIC as NearOOD1, representing a moderate distribution shift due to the use of IHC staining as opposed to standard H\&E staining while retaining histopathological tissue structure. The Pap cytology dataset is designated as NearOOD2, reflecting a more substantial shift both in tissue type and imaging modality.
    \item For RFMiD, DeepDRiD \citep{LIU2022100512} comprises ultra-wide-field fundus images, while OCT \citep{Kermany2018Cell} consists of cross-sectional optical coherence tomography images of the retina. We treat DeepDRiD as NearOOD1, representing a moderate distribution shift due to the change in imaging modality, whereas OCT is designated as NearOOD2, reflecting a more substantial shift arising from a fundamentally different imaging technique.
\end{enumerate}
    
Finally, at the fourth level, we consider \emph{farOOD} samples drawn from the MS-COCO-5k \citep{Lin2014COCO} dataset, which comprises natural images of everyday objects and scenes. These images are visually and semantically distant from the medical domains considered in our study, making MS-COCO-5k a challenging farOOD benchmark.

Together with corruptions, domain-shifted and farOOD settings, these configurations establish a structured and graded OOD spectrum for each dataset, enabling systematic benchmarking of robustness and generalisation of the proposed method across medical imaging domains.

\begin{figure*}
    \centering
    \includegraphics[width=0.85\linewidth]{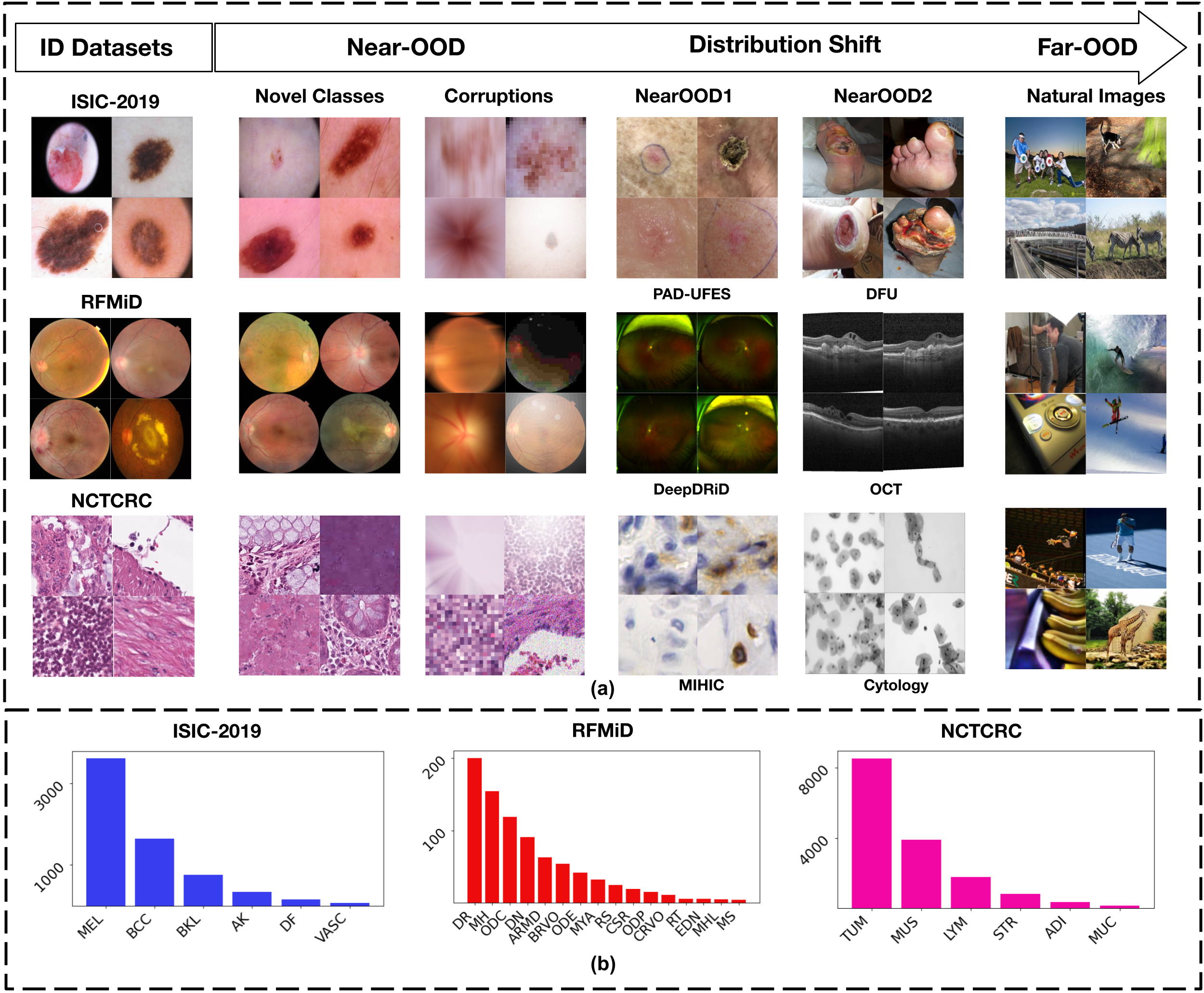}
    \caption{Dataset composition and evaluation setup for long-tailed OOD detection. (a) Shows representative samples from the ID datasets together with their corresponding nearOOD and farOOD evaluation data. (b) Presents the class label distributions of the ID datasets.}
    \label{fig:ood-datasets-spectrum}
\end{figure*}

\subsection{Implementation Details}  
\label{ssec:imp_details}
All models are implemented in PyTorch and trained on NVIDIA A6000 GPUs using a ResNet-18 backbone pretrained on ImageNet. 
We use the Adam optimizer with a cosine learning-rate schedule (no warmup) and train all models for 75 epochs with an initial learning rate of $1\times10^{-4}$. All reported results are averaged over seven independent random seeds.
For the state-of-the-art comparison, for other methods, we keep the optimizer, number of epochs, learning rate, and batch size identical to our training protocol whenever applicable. For these methods, hyperparameters are adopted as specified in the original publications (or tuned to achieve better performance for a fair comparison).
\subsubsection{Auxiliary Dataset for OOD detection training}
\label{sssec:aux_dataset_details}
We use ImageNet-100 \citep{ILSVRC15}, a curated subset of the ImageNet dataset containing 100 classes sampled from the original 1,000-class hierarchy. The dataset encompasses diverse natural images spanning a wide range of objects and scenes, providing a broad and semantically distinct set of samples that do not overlap with the medical ID datasets. Its balanced class composition and high variability make it suitable for training the outlier expert to recognise generic OOD features without biasing ID classification.

\subsection{Evaluation Metrics} 
\label{ssec:eval_metrics}

To assess the performance of models for ID classification, we report both the standard accuracy and balanced accuracy. Standard accuracy is defined as the fraction of correctly classified samples,
\begin{equation}
\text{Acc} = \frac{1}{N}\sum_{c=1}^{K} \text{TP}_c
\end{equation}
where $K$ is the number of classes and $N$ is the total number of samples. While standard accuracy measures the overall proportion of correctly classified samples, it can be biased towards the majority (head) classes in long-tailed datasets. Balanced accuracy mitigates this bias by averaging per-class recall,
\begin{equation}
\text{BAcc} = \frac{1}{K} \sum_{c=1}^{K} \frac{\text{TP}_c}{\text{TP}_c + \text{FN}_c}
\end{equation}
Additionally, we report accuracy for head, mid, and tail classes individually to provide a detailed view of model behaviour across the label distribution.

For OOD detection, we evaluate the model's ability to distinguish ID from OOD samples using three standard uncertainty-based metrics: (i) the Area Under the Receiver Operating Characteristic curve (AUROC) quantifies the model's discrimination capability across all possible thresholds, reflecting how well ID and OOD samples can be separated, (ii) the Area Under the Precision-Recall curve (AUPR) complements AUROC by focusing on the trade-off between precision and recall, which is particularly informative for imbalanced OOD scenarios, and (iii) the False Positive Rate at 95\% True Positive Rate (FPR95) measures the rate at which OOD samples are incorrectly classified as ID when the model correctly identifies 95\% of the ID samples, providing an interpretable measure of OOD misclassification under high-confidence conditions. For statistical comparison, we perform a paired two-tailed $t$-test across the $7$ runs between our method and the strongest baseline using identical seeds, ensuring paired observations.

\subsection{Experiments}
\label{ssec:exps}

\subsubsection{Comparison of the proposed OOD detection with state-of-the-art methods}
\label{sssec:sota_comp_ood}
We compare the OOD detection performance of the proposed \texttt{MARVEL} framework with state-of-the-art methods.
To reflect current research trends, we consider long-tailed recognition methods proposed in recent years, focusing primarily on work publised since 2022.
Earlier methods such as OLTR and HODL are not included, as more recent approaches offer improved formulations for handling class imbalance and are more representative of the present state of the field.
The selected baselines consist of PASCL, COCL, EAT, and PATT, which are widely used recent methods that address OOD detection under long-tailed distributions and capture diverse modelling strategies in this setting.
In addition, we include OE as a reference baseline that leverages auxiliary outlier data to enhance OOD detection performance, although it is not specifically designed to handle long-tailed data.
The evaluation considers multiple OOD scenarios that span a spectrum of distributional shifts, ranging from FarOOD samples to more challenging nearOOD settings, as described in Section~\ref{sssec:ood_dataset_details} using the metrics described in Section~\ref{ssec:eval_metrics}. This experimental design enables a systematic assessment of each method’s ability to distinguish ID samples from a broad range of OOD data encountered in medical settings.

\subsubsection{Comparison of the proposed ID classification with state-of-the-art methods}
\label{sssec:sota_comp_id}
We compare the ID classification performance of \texttt{MARVEL} framework against those of state-of-the-art methods. We consider the evaluation metrics specificed in Section~\ref{ssec:eval_metrics}, including balanced accuracy and individual accuracies for head, mid, and tail classes to characterise the consistency of classification performance across different class-frequency regimes.

\subsubsection{Ablation study I: Impact of the choice of the classifier}
\label{sssec:abl1_class_form}
We investigate the effect of classifier design on both ID classification and OOD detection within the \texttt{MARVEL} framework.
We compare our NvMF classifier, capable of learning non-linear decision boundaries, against commonly used baseline classifiers, including fully connected (FC), cosine and simple vMF classifiers. 
For OOD prediction, we consider the same set of classifier variants and additionally evaluate a setting in which no dedicated outlier expert is employed. 

\subsubsection{Ablation study II: Effect of the number of experts within the ensemble}
\label{sssec:abl2_exp_arch}
In this experiment, we investigate the effect of varying the number of experts within the ensemble and assess the contribution of the dedicated outlier expert. We construct models with one, two, three, and four experts to study how ensemble size influences both ID classification and OOD detection performance. 

\subsubsection{Confidence calibration analysis}
\label{sssec:calibration_experts}
In addition to accuracy and detection performance, we evaluate the confidence calibration of different methods, which is critical for reliable deployment in medical OOD detection settings. Well-calibrated models produce confidence scores that accurately reflect their empirical correctness, enabling more trustworthy decision-making under uncertainty.

\subsubsection{Risk-Coverage analysis}
\label{sssec:risk_coverage}
To evaluate model prediction reliability under uncertainty, we analyse the risk-coverage trade-off for RFMiD, ISIC2019, and NCTCRC datasets for OOD detection. Risk-coverage curves show how prediction error (\textit{risk}) varies with \textit{coverage}, which is the retained ID sample proportion, obtained by rejecting low-confidence predictions. To construct these curves, the ID and OOD samples are equally sampled (to match the size of the OOD samples or the split size of the ID test, whichever is lower, in Table~\ref{tab:dataset_summary}), so the 0.5 coverage considers all ID samples for a robust model. Here, \emph{coverage} is the fraction of samples for which the model makes a prediction (ID classification), while \emph{risk} is the error rate on these retained samples. A risk of zero up to 0.5 coverage indicates perfect separation, with all ID samples correctly retained and OOD samples rejected. For lower coverage, as the model discards low-confidence predictions and retains reliable ones, risk is lower. As coverage exceeds 0.5, OOD samples are included, thus increasing the risk. This analysis shows the model's ability to maintain low error rates under high-confidence regimes, crucial in safety-critical medical applications.

\subsubsection{Effect of the choice of the OOD detector}
\label{sssec:ood_det}
Finally, this study evaluates the framework under a range of OOD detection heads and scoring mechanisms, including logit-based (MSP, MLS), energy-based (EBO), distance-based detectors (KNN, MD) and hybrid approaches (NN-Guide). This evaluation examines the compatibility of the framework with different detection criteria and provides a basis for selecting an appropriate scoring mechanism in settings with distinct operational requirements. 

\subsubsection{Qualitative analysis of learned representations}
\label{sssec:tsne}
To gain further insight into the learned representation by the proposed framework, we perform a qualitative analysis using t-SNE visualizations of the learned feature space on the NCTCRC dataset (since it comprises sufficient samples in the tail classes to appreciate the separability). 
We compare three settings: (a) a baseline model trained with a vMF classifier, (b) the proposed NvMF classifier, and (c) the margin-aware NvMF multi-expert framework. 
The embeddings are obtained from the penultimate feature layer and projected into two dimensions for visualization.
This analysis provides an intuitive understanding of how the margin-aware experts contribute to improved discrimination for ID classification.

\section{Results}

\subsection{Comparison of the proposed OOD detection with state-of-the-art methods}
Tables \ref{tab:results_rfmid}, \ref{tab:results_isic} and \ref{tab:results_nctcrc} summarise the results of comparison of \texttt{\texttt{MARVEL}} with the recent state-of-the-art methods. 
Across all three datasets, the proposed \texttt{MARVEL} consistently achieves the strongest overall performance. In particular, \texttt{MARVEL} obtains the best average AUROC, AUPR and FPR95 on RFMiD, ISIC2019, and NCTCRC, significantly higher than other baselines (with p-value $<$ 0.01) demonstrating that the proposed framework generalises well across different medical imaging domains and OOD scenarios.
Among the competing methods, COCL emerges as the second-best baseline on RFMiD and ISIC2019, while EAT achieves the second-best performance on NCTCRC.
Notably, \texttt{MARVEL} performs perfectly on the FarOOD setting across all datasets (except NCTCRC, still achieving AUROC $>$ 99.00), achieving near-saturated AUROC and AUPR values with essentially zero FPR95.
This strong performance can be attributed to the dedicated inlier–outlier expert detector, which strongly separates farOOD samples from the ID data.
In contrast, open-set detection remains the most challenging scenario across methods.
While \texttt{MARVEL} still achieves the best results in this setting, the AUROC values remain comparatively lower with performance consistently lower than 85\% AUROC, whereas the remaining OOD categories (NearOOD and Corruptions) largely surpass this threshold.

From Table~\ref{tab:results_rfmid}, 
\texttt{MARVEL} achieves the best overall performance with a significantly higher average AUROC, AUPR and FPR95, outperforming all baseline methods. COCL provides the second-best average AUROC of 87.38, largely driven by its near-saturated results on NearOOD2 (99.99 AUROC) and FarOOD (100.00 AUROC), which allows it to remain relatively close to \texttt{MARVEL} in terms of average performance.
However, COCL performs noticeably worse on other parts of the OOD spectrum, particularly in the NearOOD1 and Corruption settings where it achieves significantly less (p-value $<$ 0.01) AUROC with a differences of 7.07\% and 7.27\% respectively, compared to \texttt{MARVEL}. For open-set detection, while \texttt{MARVEL} still achieves the best AUROC, the second-best result is obtained by EAT.

\begin{table*}[t]
\centering
\scriptsize
\setlength{\tabcolsep}{1pt}
\caption{Comparison of OOD detection results of our method with those of state-of-the-art methods on the RFMiD dataset (mean $\pm$ standard deviation). The compared methods include OE \citep{hendrycks2019oe}, PASCL \citep{wang2022partial}, EAT \citep{wei2024EAT}, PATT \citep{he2024longtailedoutofdistributiondetectionprioritizing}, COCL \citep{miao2024out}. Performance is measured using AUROC, AUPR, and FPR95, higher AUROC and AUPR, and lower FPR95 indicate superior performance. The best results are highlighted in \textbf{bold}, while the second-best results are \uline{underlined}. $[\ddagger]$ $p < 0.01$, $[\dagger]$ $p < 0.05$; paired $t$-test with respect to the top baseline result.}
\begin{adjustbox}{max width=\textwidth}
\begin{tabular}{l|ccc|ccc|ccc|ccc|ccc|ccc}
\toprule
\multirow{2}{*}{\textbf{Method}}
& \multicolumn{18}{c}{\textbf{RFMiD}} \\
\cmidrule(lr){2-19}
& \multicolumn{3}{c}{\textbf{Open Set}}
& \multicolumn{3}{c}{\textbf{NearOOD1}}
& \multicolumn{3}{c}{\textbf{NearOOD2}}
& \multicolumn{3}{c}{\textbf{Corruptions}}
& \multicolumn{3}{c}{\textbf{FarOOD}}
& \multicolumn{3}{c}{\textbf{Average}} \\
\cmidrule(lr){2-4}\cmidrule(lr){5-7}\cmidrule(lr){8-10}
\cmidrule(lr){11-13}\cmidrule(lr){14-16}\cmidrule(lr){17-19}
& AUROC & AUPR & FPR95
& AUROC & AUPR & FPR95
& AUROC & AUPR & FPR95
& AUROC & AUPR & FPR95
& AUROC & AUPR & FPR95
& AUROC & AUPR & FPR95 \\
\midrule

OE
& 53.60 & 79.33 & 95.28 & 49.82 & 66.95 & 100.00 & 88.97 & 86.83 & 66.91 & 68.77 & 53.81 & 71.93 & 92.67 & 57.07 & 34.90 & 70.77 & 68.80 & 73.80 \\
& {\scriptsize$\pm$1.15} & {\scriptsize$\pm$0.32} & {\scriptsize$\pm$1.35}
& {\scriptsize$\pm$14.05} & {\scriptsize$\pm$13.77} & {\scriptsize$\pm$0.00}
& {\scriptsize$\pm$7.26} & {\scriptsize$\pm$8.61} & {\scriptsize$\pm$33.09}
& {\scriptsize$\pm$10.87} & {\scriptsize$\pm$13.75} & {\scriptsize$\pm$19.10}
& {\scriptsize$\pm$0.60} & {\scriptsize$\pm$8.30} & {\scriptsize$\pm$3.88}
& {\scriptsize$\pm$3.25} & {\scriptsize$\pm$3.66} & {\scriptsize$\pm$9.23} \\

PASCL
& 54.83 & 80.52 & 93.81 & 80.90 & 87.72 & 88.02 & 94.30 & 91.68 & 31.78 & 66.80 & 49.84 & 76.46 & 98.60 & 95.82 & 2.37 & 79.09 & 81.12 & 58.49 \\
& {\scriptsize$\pm$2.37} & {\scriptsize$\pm$1.85} & {\scriptsize$\pm$0.88}
& {\scriptsize$\pm$5.96} & {\scriptsize$\pm$6.04} & {\scriptsize$\pm$5.92}
& {\scriptsize$\pm$2.49} & {\scriptsize$\pm$5.23} & {\scriptsize$\pm$18.33}
& {\scriptsize$\pm$2.70} & {\scriptsize$\pm$2.15} & {\scriptsize$\pm$14.46}
& {\scriptsize$\pm$1.83} & {\scriptsize$\pm$1.52} & {\scriptsize$\pm$2.16}
& {\scriptsize$\pm$0.87} & {\scriptsize$\pm$0.93} & {\scriptsize$\pm$0.94} \\

EAT
& \uline{58.67} & \uline{80.95} & \uline{91.15} & 82.99 & 90.01 & 80.60 & 74.54 & 76.37 & 98.38 & 76.98 & 56.23 & 47.24 & 67.77 & 57.95 & 99.13 & 72.19 & 72.30 & 83.30 \\
& {\scriptsize$\pm$0.54} & {\scriptsize$\pm$1.26} & {\scriptsize$\pm$1.53}
& {\scriptsize$\pm$5.59} & {\scriptsize$\pm$3.36} & {\scriptsize$\pm$14.63}
& {\scriptsize$\pm$9.63} & {\scriptsize$\pm$9.14} & {\scriptsize$\pm$1.71}
& {\scriptsize$\pm$3.60} & {\scriptsize$\pm$4.46} & {\scriptsize$\pm$14.43}
& {\scriptsize$\pm$9.59} & {\scriptsize$\pm$12.74} & {\scriptsize$\pm$1.38}
& {\scriptsize$\pm$4.46} & {\scriptsize$\pm$4.38} & {\scriptsize$\pm$2.34} \\

PATT 
& 50.05 & 77.56 & 92.04 & 80.59 & 88.13 & 83.85 & 80.12 & 79.43 & 63.50 & 71.95 & 67.62 & 79.17 & 95.35 & 89.15 & 31.97 & 75.61 & 80.38 & 70.10 \\
& {\scriptsize$\pm$4.52} & {\scriptsize$\pm$1.86} & {\scriptsize$\pm$2.65}
& {\scriptsize$\pm$7.83} & {\scriptsize$\pm$4.80} & {\scriptsize$\pm$11.17}
& {\scriptsize$\pm$23.12} & {\scriptsize$\pm$24.10} & {\scriptsize$\pm$54.44}
& {\scriptsize$\pm$16.43} & {\scriptsize$\pm$18.52} & {\scriptsize$\pm$23.98}
& {\scriptsize$\pm$7.24} & {\scriptsize$\pm$13.23} & {\scriptsize$\pm$51.36}
& {\scriptsize$\pm$1.51} & {\scriptsize$\pm$3.10} & {\scriptsize$\pm$12.39} \\

COCL
& 53.23 & 80.17 & 94.40 & \uline{96.16} & \uline{97.63} & \uline{22.27} & \uline{99.99} & \uline{99.99} & \textbf{0.00} & \uline{87.51} & \uline{74.95} & \uline{31.15} & \textbf{100.00} & \uline{99.98} & \textbf{0.00} & \uline{87.38} & \uline{90.54} & \uline{29.56} \\
& {\scriptsize$\pm$4.88} & {\scriptsize$\pm$3.29} & {\scriptsize$\pm$2.23}
& {\scriptsize$\pm$5.22} & {\scriptsize$\pm$3.22} & {\scriptsize$\pm$33.15}
& {\scriptsize$\pm$0.01} & {\scriptsize$\pm$0.02} & {\scriptsize$\pm$0.00}
& {\scriptsize$\pm$2.96} & {\scriptsize$\pm$6.55} & {\scriptsize$\pm$4.46}
& {\scriptsize$\pm$0.00} & {\scriptsize$\pm$0.03} & {\scriptsize$\pm$0.00}
& {\scriptsize$\pm$0.42} & {\scriptsize$\pm$0.92} & {\scriptsize$\pm$6.30} \\

MARVEL
& \textbf{59.85} & \textbf{81.55} & \textbf{91.15} & \textbf{99.85}\textsuperscript{$\ddagger$} & \textbf{99.91}\textsuperscript{$\dagger$} & \textbf{0.52}\textsuperscript{$\ddagger$} & \textbf{100.00} & \textbf{100.00} & 
\textbf{0.00} & \textbf{95.75}\textsuperscript{$\ddagger$} & \textbf{90.94}\textsuperscript{$\ddagger$} & \textbf{13.85}\textsuperscript{$\ddagger$} & \textbf{100.00} & \textbf{100.00} & \textbf{0.00} & \textbf{91.09}\textsuperscript{$\ddagger$} & \textbf{94.48}\textsuperscript{$\ddagger$} & \textbf{21.11}\textsuperscript{$\ddagger$} \\
& {\scriptsize$\pm$0.51} & {\scriptsize$\pm$0.25} & {\scriptsize$\pm$1.77}
& {\scriptsize$\pm$0.14} & {\scriptsize$\pm$0.08} & {\scriptsize$\pm$0.60}
& {\scriptsize$\pm$0.00} & {\scriptsize$\pm$0.00} & {\scriptsize$\pm$0.00}
& {\scriptsize$\pm$0.76} & {\scriptsize$\pm$1.62} & {\scriptsize$\pm$0.55}
& {\scriptsize$\pm$0.00} & {\scriptsize$\pm$0.00} & {\scriptsize$\pm$0.00}
& {\scriptsize$\pm$0.25} & {\scriptsize$\pm$0.29} & {\scriptsize$\pm$0.37} \\

\bottomrule
\end{tabular}
\end{adjustbox}

\label{tab:results_rfmid}
\end{table*}

\begin{table*}[t]
\centering
\scriptsize
\setlength{\tabcolsep}{1pt}
\caption{Comparison of OOD detection results of our method with those of state-of-the-art methods on the ISIC2019 dataset (mean $\pm$ standard deviation). The compared methods are same as those listed in table~\ref{tab:results_rfmid}. Performance is measured using AUROC, AUPR, and FPR95, higher AUROC and AUPR, and lower FPR95 indicate superior performance. The best results are highlighted in \textbf{bold}, while the second-best results are \uline{underlined}. $[\ddagger]$ $p < 0.01$, $[\dagger]$ $p < 0.05$; paired $t$-test with respect to the top baseline result.}
\begin{adjustbox}{max width=\textwidth}
\begin{tabular}{l|ccc|ccc|ccc|ccc|ccc|ccc}
\toprule
 & \multicolumn{18}{c}{\textbf{ISIC 2019}} \\
\cmidrule(lr){2-19}
\multirow{2}{*}{\textbf{Method}} 
& \multicolumn{3}{c}{\textbf{Open Set}} 
& \multicolumn{3}{c}{\textbf{NearOOD1}} 
& \multicolumn{3}{c}{\textbf{NearOOD2}} 
& \multicolumn{3}{c}{\textbf{Corruptions}} 
& \multicolumn{3}{c}{\textbf{FarOOD}} 
& \multicolumn{3}{c}{\textbf{Average}} \\
\cmidrule(lr){2-4}\cmidrule(lr){5-7}\cmidrule(lr){8-10}
\cmidrule(lr){11-13}\cmidrule(lr){14-16}\cmidrule(lr){17-19}
& AUROC & AUPR & FPR95
& AUROC & AUPR & FPR95
& AUROC & AUPR & FPR95
& AUROC & AUPR & FPR95
& AUROC & AUPR & FPR95
& AUROC & AUPR & FPR95 \\
\midrule

OE
& 44.81 & 55.97 & 96.44 & 80.38 & 89.35 & 77.43 & 74.50 & 86.11 & 79.08 & 84.27 & 78.65 & 61.00 & 55.39 & 62.17 & 96.66 & 67.87 & 74.45 & 82.12 \\
& {\scriptsize$\pm$1.83} & {\scriptsize$\pm$0.76} & {\scriptsize$\pm$1.15}
& {\scriptsize$\pm$0.50} & {\scriptsize$\pm$0.42} & {\scriptsize$\pm$2.56}
& {\scriptsize$\pm$2.73} & {\scriptsize$\pm$1.60} & {\scriptsize$\pm$6.14}
& {\scriptsize$\pm$5.95} & {\scriptsize$\pm$7.04} & {\scriptsize$\pm$13.50}
& {\scriptsize$\pm$12.44} & {\scriptsize$\pm$11.37} & {\scriptsize$\pm$2.69}
& {\scriptsize$\pm$3.61} & {\scriptsize$\pm$3.37} & {\scriptsize$\pm$2.29} \\

PASCL
& 44.69 & 55.08 & 95.38 & \uline{94.01} & \uline{96.27} & \uline{19.68} & \uline{84.00} & \uline{91.42} & \uline{57.98} & 83.43 & 74.44 & 52.56 & 98.59 & 99.13 & 1.08 & 80.94 & 83.27 & 45.34 \\
& {\scriptsize$\pm$0.49} & {\scriptsize$\pm$0.22} & {\scriptsize$\pm$0.35}
& {\scriptsize$\pm$1.26} & {\scriptsize$\pm$0.80} & {\scriptsize$\pm$8.73}
& {\scriptsize$\pm$2.57} & {\scriptsize$\pm$1.57} & {\scriptsize$\pm$10.08}
& {\scriptsize$\pm$1.52} & {\scriptsize$\pm$1.08} & {\scriptsize$\pm$15.31}
& {\scriptsize$\pm$0.70} & {\scriptsize$\pm$0.45} & {\scriptsize$\pm$1.21}
& {\scriptsize$\pm$0.71} & {\scriptsize$\pm$0.51} & {\scriptsize$\pm$6.58} \\

EAT
& 47.81 & 59.40 & 95.72 & 87.77 & 93.09 & 47.41 & 75.69 & 87.10 & 77.89 & 90.70 & 85.95 & 39.99 & 94.63 & 96.96 & 58.43 & 79.32 & 84.50 & 63.89 \\
& {\scriptsize$\pm$0.32} & {\scriptsize$\pm$0.39} & {\scriptsize$\pm$0.17}
& {\scriptsize$\pm$5.37} & {\scriptsize$\pm$3.26} & {\scriptsize$\pm$11.78}
& {\scriptsize$\pm$3.61} & {\scriptsize$\pm$1.99} & {\scriptsize$\pm$5.36}
& {\scriptsize$\pm$1.07} & {\scriptsize$\pm$0.97} & {\scriptsize$\pm$9.25}
& {\scriptsize$\pm$1.45} & {\scriptsize$\pm$0.78} & {\scriptsize$\pm$35.31}
& {\scriptsize$\pm$1.84} & {\scriptsize$\pm$1.05} & {\scriptsize$\pm$7.99} \\

PATT 
& \uline{53.35} & \uline{62.35} & \uline{95.78} & 62.18 & 80.32 & 97.24 & 74.55 & 84.95 & 81.56 & 53.98 & 52.06 & 99.66 & 85.37 & 91.42 & 93.40 & 66.09 & 74.82 & 93.53 \\
& {\scriptsize$\pm$4.75} & {\scriptsize$\pm$4.79} & {\scriptsize$\pm$1.28}
& {\scriptsize$\pm$6.33} & {\scriptsize$\pm$3.24} & {\scriptsize$\pm$1.55}
& {\scriptsize$\pm$0.91} & {\scriptsize$\pm$0.90} & {\scriptsize$\pm$2.16}
& {\scriptsize$\pm$11.33} & {\scriptsize$\pm$10.20} & {\scriptsize$\pm$0.09}
& {\scriptsize$\pm$10.44} & {\scriptsize$\pm$6.28} & {\scriptsize$\pm$11.33}
& {\scriptsize$\pm$5.21} & {\scriptsize$\pm$3.88} & {\scriptsize$\pm$1.75} \\

COCL
& 49.18 & 58.67 & 94.83 & 91.21 & 95.22 & 40.28 & 79.68 & 87.59 & 59.27 & \uline{91.50} & \uline{86.72} & \uline{34.98} & \textbf{100.00} & \textbf{100.00} & \textbf{0.00} & \uline{82.31} & \uline{85.64} & \uline{45.87} \\
& {\scriptsize$\pm$1.19} & {\scriptsize$\pm$1.41} & {\scriptsize$\pm$0.05}
& {\scriptsize$\pm$1.94} & {\scriptsize$\pm$1.13} & {\scriptsize$\pm$6.86}
& {\scriptsize$\pm$1.29} & {\scriptsize$\pm$1.80} & {\scriptsize$\pm$3.35}
& {\scriptsize$\pm$1.76} & {\scriptsize$\pm$1.76} & {\scriptsize$\pm$13.46}
& {\scriptsize$\pm$0.00} & {\scriptsize$\pm$0.00} & {\scriptsize$\pm$0.00}
& {\scriptsize$\pm$0.45} & {\scriptsize$\pm$0.50} & {\scriptsize$\pm$3.11} \\

MARVEL
& \textbf{54.04} & \textbf{63.49} & \textbf{94.68} & \textbf{98.78}\textsuperscript{$\ddagger$} & \textbf{99.06}\textsuperscript{$\ddagger$} & \textbf{3.45}\textsuperscript{$\ddagger$} & \textbf{86.95}\textsuperscript{$\dagger$} & \textbf{92.82}\textsuperscript{$\ddagger$} & \textbf{49.78}\textsuperscript{$\ddagger$} & \textbf{96.39}\textsuperscript{$\ddagger$} & \textbf{93.57}\textsuperscript{$\ddagger$} & \textbf{16.33}\textsuperscript{$\ddagger$} & \textbf{100.00} & \textbf{100.00} & \textbf{0.00} & \textbf{87.23}\textsuperscript{$\ddagger$} & \textbf{89.79}\textsuperscript{$\ddagger$} & \textbf{32.85}\textsuperscript{$\ddagger$} \\
& {\scriptsize$\pm$0.82} & {\scriptsize$\pm$1.10} & {\scriptsize$\pm$0.20}
& {\scriptsize$\pm$0.67} & {\scriptsize$\pm$0.55} & {\scriptsize$\pm$1.63}
& {\scriptsize$\pm$0.17} & {\scriptsize$\pm$0.39} & {\scriptsize$\pm$1.21}
& {\scriptsize$\pm$0.18} & {\scriptsize$\pm$0.34} & {\scriptsize$\pm$2.53}
& {\scriptsize$\pm$0.00} & {\scriptsize$\pm$0.00} & {\scriptsize$\pm$0.00}
& {\scriptsize$\pm$0.19} & {\scriptsize$\pm$0.21} & {\scriptsize$\pm$0.98} \\

\bottomrule
\end{tabular}
\end{adjustbox}
\label{tab:results_isic}
\end{table*}

From Table~\ref{tab:results_isic}, \texttt{MARVEL} achieves the best overall performance compared to the second strongest baseline COCL (with 4.92\% less AUROC).
However, unlike \texttt{MARVEL} which maintains strong performance across the entire OOD spectrum, other methods tend to perform well only in specific settings. 
For example, PASCL achieves the second-best performance in the NearOOD settings (with AUROC 4.77\%$\downarrow$ in NearOOD1 and 2.95\%$\downarrow$ in NearOOD2 than \texttt{MARVEL}), while COCL performs strongly on Corruptions (with AUROC 4.89\%$\downarrow$ than \texttt{MARVEL}). 
For open-set detection, \texttt{MARVEL} still achieves the best performance, while the second-best result is obtained by PATT.

\begin{table*}[t]
\centering
\scriptsize
\setlength{\tabcolsep}{1pt}
\caption{Comparison of OOD detection results of our method with those of state-of-the-art methods on the NCTCRC dataset (mean $\pm$ standard deviation). The compared methods are same as those listed in table~\ref{tab:results_rfmid}. Performance is measured using AUROC, AUPR, and FPR95, higher AUROC and AUPR, and lower FPR95 indicate superior performance. The best results are highlighted in \textbf{bold}, while the second-best results are \uline{underlined}. $[\ddagger]$ $p < 0.01$, $[\dagger]$ $p < 0.05$; paired $t$-test with respect to the top baseline result.}

\begin{adjustbox}{max width=\textwidth}
\begin{tabular}{l|ccc|ccc|ccc|ccc|ccc|ccc}
\toprule
\multirow{2}{*}{\textbf{Method}}
& \multicolumn{18}{c}{\textbf{NCTCRC}} \\
\cmidrule(lr){2-19}
& \multicolumn{3}{c}{\textbf{Open Set}}
& \multicolumn{3}{c}{\textbf{NearOOD1}}
& \multicolumn{3}{c}{\textbf{NearOOD2}}
& \multicolumn{3}{c}{\textbf{Corruptions}}
& \multicolumn{3}{c}{\textbf{FarOOD}}
& \multicolumn{3}{c}{\textbf{Average}} \\
\cmidrule(lr){2-4}\cmidrule(lr){5-7}\cmidrule(lr){8-10}
\cmidrule(lr){11-13}\cmidrule(lr){14-16}\cmidrule(lr){17-19}
& AUROC & AUPR & FPR95
& AUROC & AUPR & FPR95
& AUROC & AUPR & FPR95
& AUROC & AUPR & FPR95
& AUROC & AUPR & FPR95
& AUROC & AUPR & FPR95 \\
\midrule

OE 
& 61.02 & 83.77 & 91.55 & 80.51 & 98.61 & 79.72 & 83.54 & 77.46 & 55.84 & 69.48 & 70.19 & 80.23 & 67.64 & 92.78 & 86.84 & 72.44 & 84.56 & 78.84 \\
& {\scriptsize$\pm$3.74} & {\scriptsize$\pm$2.49} & {\scriptsize$\pm$1.10}
& {\scriptsize$\pm$6.34} & {\scriptsize$\pm$0.63} & {\scriptsize$\pm$15.35}
& {\scriptsize$\pm$10.87} & {\scriptsize$\pm$14.20} & {\scriptsize$\pm$32.42}
& {\scriptsize$\pm$4.07} & {\scriptsize$\pm$3.45} & {\scriptsize$\pm$6.13}
& {\scriptsize$\pm$10.55} & {\scriptsize$\pm$2.92} & {\scriptsize$\pm$9.48}
& {\scriptsize$\pm$0.58} & {\scriptsize$\pm$1.42} & {\scriptsize$\pm$6.45} \\

PASCL 
& 53.58 & 80.06 & \uline{91.24} & 93.01 & 99.56 & \uline{39.39} & 83.64 & 78.26 & 56.84 & 79.44 & 81.28 & 71.88 & 98.34 & 99.75 & 10.65 & 81.60 & 87.78 & 54.00 \\
& {\scriptsize$\pm$2.84} & {\scriptsize$\pm$1.31} & {\scriptsize$\pm$5.02}
& {\scriptsize$\pm$5.36} & {\scriptsize$\pm$0.36} & {\scriptsize$\pm$27.80}
& {\scriptsize$\pm$6.56} & {\scriptsize$\pm$7.03} & {\scriptsize$\pm$19.95}
& {\scriptsize$\pm$5.26} & {\scriptsize$\pm$3.16} & {\scriptsize$\pm$20.16}
& {\scriptsize$\pm$2.37} & {\scriptsize$\pm$0.36} & {\scriptsize$\pm$16.73}
& {\scriptsize$\pm$1.81} & {\scriptsize$\pm$1.26} & {\scriptsize$\pm$12.45} \\

EAT 
& 62.49 & 84.49 & 91.59 & \uline{94.90} & \uline{99.71} & 44.46 & \textbf{93.08} & \textbf{93.07} & \uline{56.47} & 75.39 & 71.60 & 77.91 & 89.64 & 98.42 & 96.51 & \uline{83.10} & \uline{89.46} & \uline{73.39} \\
& {\scriptsize$\pm$1.87} & {\scriptsize$\pm$0.45} & {\scriptsize$\pm$1.96}
& {\scriptsize$\pm$1.48} & {\scriptsize$\pm$0.08} & {\scriptsize$\pm$24.60}
& {\scriptsize$\pm$0.30} & {\scriptsize$\pm$0.60} & {\scriptsize$\pm$12.49}
& {\scriptsize$\pm$2.81} & {\scriptsize$\pm$5.05} & {\scriptsize$\pm$6.73}
& {\scriptsize$\pm$1.34} & {\scriptsize$\pm$0.20} & {\scriptsize$\pm$4.87}
& {\scriptsize$\pm$1.25} & {\scriptsize$\pm$1.00} & {\scriptsize$\pm$7.63} \\

PATT 
& \uline{55.81} & \uline{82.78} & 98.89 & 81.98 & 98.91 & 97.29 & 74.32 & 79.98 & 99.06 & 62.82 & 69.88 & 99.21 & 81.02 & 96.91 & 96.54 & 71.19 & 85.69 & 98.20 \\
& {\scriptsize$\pm$1.04} & {\scriptsize$\pm$1.47} & {\scriptsize$\pm$1.52}
& {\scriptsize$\pm$6.88} & {\scriptsize$\pm$0.44} & {\scriptsize$\pm$4.70}
& {\scriptsize$\pm$5.03} & {\scriptsize$\pm$2.77} & {\scriptsize$\pm$0.84}
& {\scriptsize$\pm$3.02} & {\scriptsize$\pm$1.86} & {\scriptsize$\pm$1.07}
& {\scriptsize$\pm$9.83} & {\scriptsize$\pm$1.67} & {\scriptsize$\pm$5.99}
& {\scriptsize$\pm$4.61} & {\scriptsize$\pm$1.00} & {\scriptsize$\pm$2.77} \\

COCL
& 52.39 & 79.67 & 92.21 & 91.77 & 99.46 & 42.93 & 84.95 & 82.06 & 64.96 & \uline{81.87} & \uline{80.66} & \uline{60.03} & \textbf{99.98} & \textbf{100.00} & \uline{0.09} & 82.19 & 88.37 & 52.05 \\
& {\scriptsize$\pm$4.79} & {\scriptsize$\pm$1.98} & {\scriptsize$\pm$5.22}
& {\scriptsize$\pm$4.01} & {\scriptsize$\pm$0.26} & {\scriptsize$\pm$22.07}
& {\scriptsize$\pm$6.98} & {\scriptsize$\pm$9.93} & {\scriptsize$\pm$12.21}
& {\scriptsize$\pm$1.26} & {\scriptsize$\pm$1.39} & {\scriptsize$\pm$12.50}
& {\scriptsize$\pm$0.03} & {\scriptsize$\pm$0.00} & {\scriptsize$\pm$0.13}
& {\scriptsize$\pm$2.60} & {\scriptsize$\pm$2.21} & {\scriptsize$\pm$7.69} \\

MARVEL
& \textbf{69.18}\textsuperscript{$\ddagger$} & \textbf{87.20}\textsuperscript{$\ddagger$} & \textbf{83.93}\textsuperscript{$\ddagger$} & \textbf{97.07}\textsuperscript{$\ddagger$} & \textbf{99.84} & \textbf{18.90}\textsuperscript{$\ddagger$} & \uline{92.58} & \uline{92.40} & \textbf{56.09} & \textbf{94.77}\textsuperscript{$\ddagger$} & \textbf{93.96}\textsuperscript{$\ddagger$} & \textbf{22.55}\textsuperscript{$\ddagger$} & \uline{99.86} & \uline{99.98} & \textbf{0.00} & \textbf{90.49}\textsuperscript{$\ddagger$} & \textbf{94.67}\textsuperscript{$\ddagger$} & \textbf{36.49}\textsuperscript{$\ddagger$} \\
& {\scriptsize$\pm$3.67} & {\scriptsize$\pm$2.25} & {\scriptsize$\pm$0.18}
& {\scriptsize$\pm$0.93} & {\scriptsize$\pm$0.05} & {\scriptsize$\pm$14.27}
& {\scriptsize$\pm$2.61} & {\scriptsize$\pm$3.67} & {\scriptsize$\pm$5.47}
& {\scriptsize$\pm$0.87} & {\scriptsize$\pm$1.83} & {\scriptsize$\pm$2.45}
& {\scriptsize$\pm$0.11} & {\scriptsize$\pm$0.02} & {\scriptsize$\pm$0.00}
& {\scriptsize$\pm$1.28} & {\scriptsize$\pm$1.53} & {\scriptsize$\pm$3.25} \\

\bottomrule
\end{tabular}
\end{adjustbox}

\label{tab:results_nctcrc}
\end{table*}

\texttt{MARVEL} achieves the best overall performance on the NCTCRC dataset, as reported in Table~\ref{tab:results_nctcrc}, with EAT being the  second-best baseline (with average AUROC 7\% lower than \texttt{MARVEL}). \texttt{MARVEL} shows significant improvements in the Open-Set, NearOOD1, and Corruption settings (with p-values $<$ 0.01 for all metrics except AUPR in NearOOD1). On NearOOD2 and FarOOD, \texttt{MARVEL} provides on-par performance compared to baselines (with all performances close to saturation value of 100.00, making the difference practically negligible). However, \texttt{MARVEL} still achieves a lower FPR95 in this setting, indicating better control of false positives despite the small AUROC gap.
For open-set detection, \texttt{MARVEL} achieves 13\% higher AUROC over PATT.

\subsection{Comparison of the proposed ID classification with state-of-the-art methods}

\begin{table*}[t]
\centering
\caption{Comparison of the ID classification performance of our method with those of state-of-the-art methods on all three datasets (mean $\pm$ standard deviation). The compared methods are same as those listed in table~\ref{tab:results_rfmid}. Results are reported in terms of Accuracy and Balanced Accuracy, with higher values indicating better performance. The best results are highlighted in \textbf{bold}, and the second-best results are \underline{underlined}. $[\ddagger]$ $p < 0.01$, $[\dagger]$ $p < 0.05$; paired $t$-test with respect to the top baseline result.}
\label{tab:overall_acc_bacc}
\begin{adjustbox}{max width=\textwidth}
\begin{tabular}{l|cc cc cc}
\hline
\multirow{2}{*}{Method} 
& \multicolumn{2}{c}{RFMiD}
& \multicolumn{2}{c}{ISIC2019}
& \multicolumn{2}{c}{NCT-CRC} \\
& Acc & BAcc & Acc & BAcc & Acc & BAcc \\
\hline
OE
& 36.30 {$\pm$ 3.03} & 19.65{$\pm$ 1.02}
& 61.62{$\pm$ 3.41} & 43.69{$\pm$ 4.75}
& 32.52{$\pm$ 10.97} & 40.38{$\pm$ 11.08} \\

PASCL
& 48.52{$\pm$ 1.60} & 21.28{$\pm$ 2.04}
& \underline{70.67{$\pm$ 2.32}} & 50.68{$\pm$ 3.06}
& 68.30{$\pm$ 1.45} & 64.43{$\pm$ 1.26} \\

EAT
& 32.11{$\pm$ 3.27} & 13.68{$\pm$ 0.85}
& 46.12{$\pm$ 5.13} & 44.92{$\pm$ 2.29}
& 51.99{$\pm$ 11.03} & 47.67{$\pm$ 14.07} \\

PATT 
& \underline{60.30{$\pm$ 1.24}} & 41.54{$\pm$ 3.54}
& 60.04{$\pm$ 4.55} & 62.36{$\pm$ 4.21}
& \underline{71.75{$\pm$ 1.46}} & \underline{84.36{$\pm$ 3.51}} \\

COCL
& 58.03{$\pm$ 5.52} & \underline{43.94{$\pm$ 2.78}}
& 63.09{$\pm$ 3.96} & \underline{66.19{$\pm$ 1.17}}
& 68.27{$\pm$ 1.53} & 81.66{$\pm$ 2.14} \\

MARVEL
& \textbf{66.49{$\pm$ 0.45}}\textsuperscript{$\ddagger$} & \textbf{50.52{$\pm$ 2.00}}\textsuperscript{$\ddagger$}
& \textbf{72.88{$\pm$ 0.85}}\textsuperscript{$\ddagger$} & \textbf{67.18{$\pm$ 0.77}}\textsuperscript{$\dagger$}
& \textbf{77.02{$\pm$ 0.46}}\textsuperscript{$\ddagger$} & \textbf{89.38{$\pm$ 0.41}}\textsuperscript{$\ddagger$} \\
\hline
\end{tabular}
\end{adjustbox}
\end{table*}

Table~\ref{tab:overall_acc_bacc} summarises ID classification performance across all three datasets. \texttt{\texttt{MARVEL}} consistently achieves the highest accuracy and balanced accuracy (significantly higher than baselines with p-value $<$ 0.01), indicating strong overall performance. On RFMiD, while baseline methods show heavy disparity between Accuracy and balanced accuracy, reflecting strong performance on frequent head classes but poor recognition of rare tail classes. In contrast, \texttt{\texttt{MARVEL}} simultaneously improves both accuracies, reducing the disparity of performance in head and tail classes. Specifically, for RFMiD, \texttt{MARVEL} improves accuracy by 6.19\% and balanced accuracy by 6.58\% compared to second-best baselines. Similarly in ISIC2019, \texttt{MARVEL} provides 2.21\% and 0.99\% improvements, and in NCTCRC, 5.27\% and 5.02\% improvements in Accuracy and balanced accuracy respectively.
Overall, these results demonstrate that high accuracy alone can be misleading in long-tailed settings, as it may arise from overfitting to head classes, whereas \texttt{\texttt{MARVEL}} consistently improves both accuracy and balanced accuracy.

\begin{center}
\nopagebreak
\includegraphics[width=\linewidth]{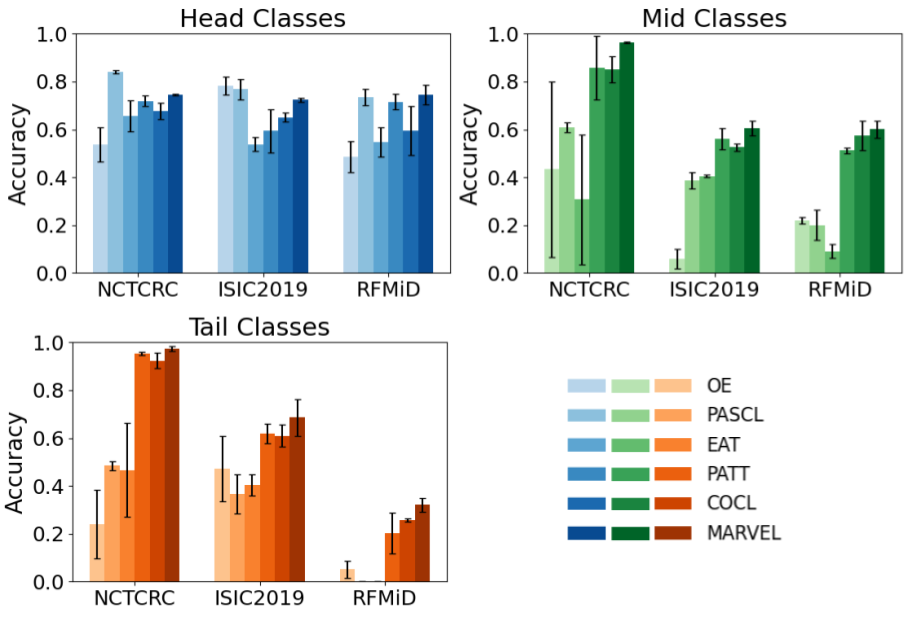}
\captionof{figure}{Head, mid, tail class accuracy comparison across NCTCRC, ISIC2019, and RFMiD datasets. Bars show mean accuracy and standard deviation over three runs.}
\label{fig:head-tail-accu}
\end{center}

Figure~\ref{fig:head-tail-accu} breaks down ID classification performance on RFMiD, ISIC2019 and NCTCRC datasets across head, mid, and tail classes, providing a fine-grained view of model behaviour under long-tailed distributions. Across all datasets, most methods achieve relatively strong head-class accuracy, with modest differences between approaches, indicating that frequent classes are generally easy to classify. In contrast, performance gaps widen substantially for mid and tail classes, where imbalance effects are most pronounced. In particular, several baselines that perform well on head classes exhibit a sharp degradation on tail classes, for example, OE and PASCL show almost 0 tail accuracies on RFMiD, highlighting overfitting to dominant classes. \texttt{MARVEL} consistently mitigates this effect, achieving the highest mid- and tail-class accuracies across all datasets, on NCTCRC the tail accuracy approaches 95\%. These improvements align with the balanced accuracy trends observed in Table~\ref{tab:overall_acc_bacc}, confirming that \texttt{\texttt{MARVEL}}’s gains stem not from improved head-class fitting alone, but from substantially enhanced recognition of under-represented classes, which is critical for robust performance in long-tailed medical classification.

\subsection{Ablation study I: Impact of the choice of the classifier}

Table~\ref{tab:ablation-classifier-experts} analyses the impact of classifier design within \texttt{MARVEL} on both its ID classification and OOD detection performances. For ID classifiers (A), we observe a consistent performance hierarchy across all datasets: while FC and cosine classifiers provide modest gains (e.g., RFMiD ACC improves marginally from 48.16 to 50.26), the adoption of a vMF-based classifier leads to a substantial jump in both ID accuracy and OOD detection. 
Incorporating the proposed NvMF classifier yields the strongest improvements, boosting accuracy (RFMiD: 7.33\%$\uparrow$, ISIC2019: 2.59\%$\uparrow$ and NCTCRC: 1.03\%$\uparrow$) and AUROC (RFMiD: 8.40\%$\uparrow$, ISIC2019: 4.3\%$\uparrow$ and NCTCRC: 0.24\%$\uparrow$) relative to vMF. 
This demonstrates the benefit of modelling class-conditional angular distributions with greater flexibility. For OOD classifiers (B), removing the outlier expert results in a marked degradation in OOD detection (e.g., AUROC drops to 79.63 on RFMiD), showing its contribution. While FC-based OOD experts achieve strong performance, the NvMF classifier remains competitive and consistently balances ID accuracy and OOD detection across datasets, indicating that angularly structured decision boundaries are well suited for both ID discrimination and outlier separation within \texttt{MARVEL}.

\begin{center}
\setlength{\tabcolsep}{2pt}
\captionof{table}{Ablation study on the effect of classifier design for ID and OOD prediction across datasets. Results are reported in terms of classification accuracy (ACC) and OOD detection performance (AUROC) on RFMiD, ISIC2019, and NCTCRC, compared with FC \citep{alexnet}, Cosine \citep{Gidaris_2018_CVPR}, vMF \citep{ming2023cider}}
\label{tab:ablation-classifier-experts}

\begin{adjustbox}{max width=0.45\textwidth}
\begin{tabular}{l cc cc cc}
\toprule
 & \multicolumn{2}{c}{RFMiD} & \multicolumn{2}{c}{ISIC2019} & \multicolumn{2}{c}{NCTCRC} \\
\cmidrule(lr){2-3}\cmidrule(lr){4-5}\cmidrule(lr){6-7}
Setting & ACC & AUROC & ACC & AUROC & ACC & AUROC \\
\midrule
\multicolumn{7}{l}{\textbf{A. ID Classifiers}} \\
\quad FC      & 48.16 & 78.30 & 62.56 & 73.35 & 73.71 & 85.61 \\
\quad Cosine  & 50.26 & 79.05 & 62.40 & 72.73 & 74.51 & 86.89 \\
\quad vMF     & 58.63 & 82.96 & 69.66 & 81.64 & 76.46 & 88.89 \\
\quad NvMF    & 65.96 & 91.36 & 72.25 & 85.94 & 77.49 & 89.13 \\
\midrule
\multicolumn{7}{l}{\textbf{B. OOD Classifiers}} \\
\quad None    & 58.63 & 79.63 & 65.09 & 74.81 & 67.77 & 80.28 \\
\quad Cosine  & 64.13 & 84.24 & 71.17 & 82.43 & 75.26 & 90.48 \\
\quad vMF     & 62.04 & 85.35 & 71.36 & 81.64 & 74.96 & 89.16 \\
\quad NvMF    & 65.44 & 88.53 & 71.40 & 83.18 & 74.51 & 90.66 \\
\quad FC      & 66.75 & 88.70 & 72.54 & 85.71 & 76.30 & 91.11 \\

\bottomrule
\end{tabular}
\end{adjustbox}
\end{center}

\subsection{Ablation study II: Effect of the number of experts within the ensemble}

Figure~\ref{fig:experts_ensemble} shows the impact of number of experts within the ensemble on ID classification and OOD detection across the three datasets.
Increasing the number of experts yields consistent performance gains up to 3 experts.
For ID classification (left), moving from a single expert to three experts improves accuracy on RFMiD ($\approx$65\% $\to$ 67.5\%), ISIC2019 ($\approx$67.5\% $\to$ 73\%), and NCTCRC ($\approx$73\% $\to$ 77\%).
Similar trends are observed for OOD detection (right), where AUROC improves on RFMiD ($\approx$86 $\to$ 91), ISIC2019 ($\approx$80 $\to$ 88) and NCTCRC ($\approx$89 $\to$ 92) when increasing to three experts.
Adding a fourth expert does not provide further improvement and, in most cases, leads to a slight degradation in OOD performance, with AUROC dropping by $\approx2\%$ on RFMiD, $\approx5\%$ on ISIC2019, and $\approx4\%$ on NCTCRC.
Based on these trends, an ensemble of three experts offers the best trade-off between performance and complexity and is therefore used as the default setting in all subsequent experiments.

\begin{center}
    \includegraphics[width=\linewidth]{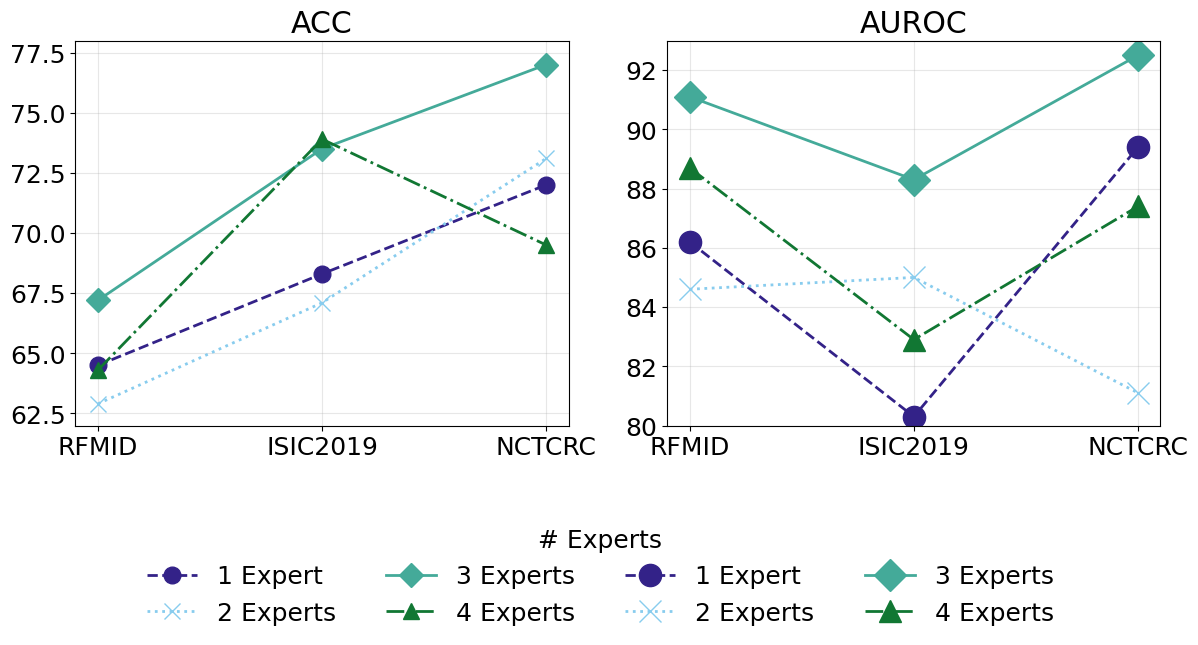}
    \captionof{figure}{Effect of expert ensembling on performance across datasets. Accuracy (ACC, left) and AUROC (right) are reported for 1 to 4 experts. Performance improves with more experts, with the ensemble of three experts achieving the best overall results across RFMID, ISIC2019, and NCTCRC datasets.}
    \label{fig:experts_ensemble}
\end{center}

\subsection{Confidence calibration analysis}

\begin{center}
    \includegraphics[width=\linewidth]{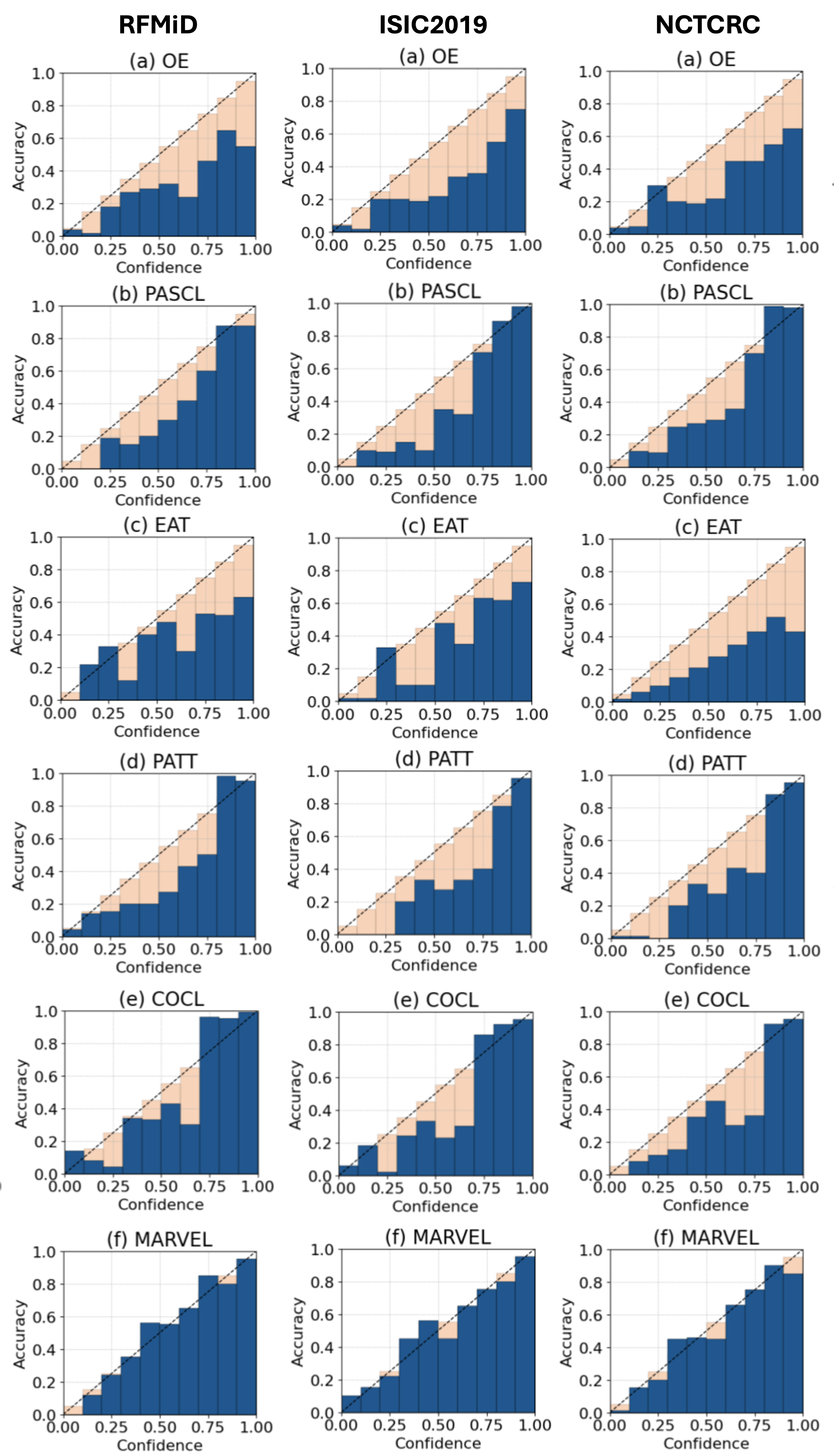}
    \captionof{figure}{Reliability diagrams comparing confidence calibration across different methods on (left to right) RFMID, ISIC2019, and NCTCRC datasets: (top to bottom) (a) OE, (b) PASCL, (c) EAT, (d) PATT, (e) COCL, and (f) \texttt{MARVEL} (proposed). The blue bars indicate the empirical accuracy within each confidence bin, while the dashed diagonal denotes perfect calibration. The shaded red region highlights calibration error.}
    \label{fig:calibration}
\end{center}

Figure~\ref{fig:calibration} presents reliability diagrams comparing the calibration behaviour of the evaluated methods including the proposed \texttt{MARVEL} on the evaluation datasets.
Each plot depicts the empirical accuracy within the confidence bins against the predicted confidence values, where the diagonal line corresponds to perfect calibration.
Deviations from this diagonal indicate miscalibration, with the shaded area highlighting the magnitude of calibration error.
Across the baseline methods, we observe varying degrees of miscalibration.
In particular, PASCL, PATT, and COCL tend to produce overconfident predictions, where the predicted confidence exceeds the empirical accuracy, while OE and EAT exhibit under-confident behaviour, where the predicted confidence underestimates the true accuracy.
In contrast, \texttt{MARVEL} demonstrates more consistent alignment between confidence and empirical accuracy, indicating improved calibration.
These observations remain consistent across the NCTCRC, ISIC2019, and RFMiD datasets. 
Specifically, the overconfidence gap in methods such as PASCL and PATT persists regardless of the underlying data distribution.
Furthermore, while EAT exhibits more pronounced under-confidence on the NCTCRC dataset compared to the other two benchmarks, MARVEL consistently minimises these deviations and maintains a tight alignment with the diagonal. 
This improvement can be attributed to the proposed expert-based architecture, where the ensemble of NvMF classifiers produces more reliable confidence estimates by aggregating diverse predictions. Such aggregation mitigates overconfident errors from individual experts and leads to smoother, better-calibrated decision boundaries, while maintaining strong ID classification and OOD detection performance.

\subsection{Risk-Coverage analysis}
\begin{center}
    \includegraphics[width=0.45\textwidth]{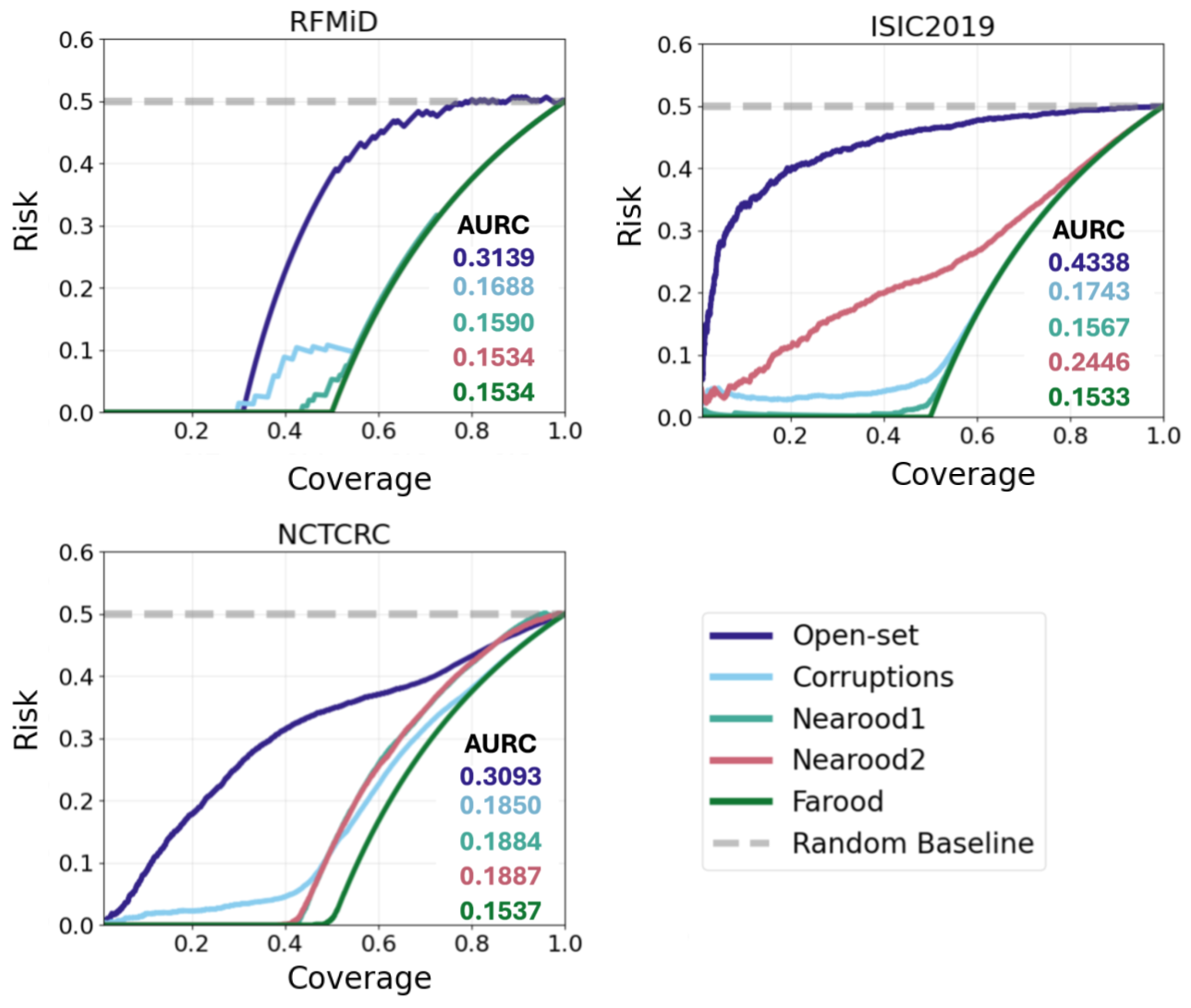}
    \captionof{figure}{Risk-coverage curves for OOD detection across RFMiD, ISIC2019, and NCTCRC datasets. The x-axis represents coverage and the y-axis denotes risk and the individual legends show the AURC (area under the risk-coverage curve) values in the corresponding plot colours.}
    \label{fig:risk-coverage}
\end{center}

Across datasets, RFMiD demonstrates the most favorable behaviour. FarOOD, NearOOD1, and NearOOD2 curves largely overlap and maintain near-zero risk up to approximately 0.5 coverage, indicating strong separation between ID and these OOD samples. 
Open-set and corruption scenarios are comparatively more challenging; however, they still achieve near-zero risk up to around 0.3 coverage, followed by a steady increase. On ISIC2019, performance degrades notably for Open-set and NearOOD2 scenarios, which exhibit higher risk across coverage levels. 
Corruptions also show increased difficulty, though risk remains low below 0.5 coverage. 
In contrast, NearOOD1 performs well, and FarOOD achieves near-perfect behaviour with minimal risk up to 0.5 coverage. For NCTCRC, Open-set remains the most challenging setting, exhibiting consistently higher risk. Corruptions and NearOOD2 maintain low risk up to approximately 0.4 coverage, after which risk increases more sharply, with NearOOD2 showing a slightly steeper rise. FarOOD and NearOOD1 again demonstrate near-ideal behaviour, maintaining very low risk over a substantial coverage range. 

\subsection{Effect of the choice of the OOD detector}

\begin{center}
\captionof{table}{Ablation study on different OOD detector types across the three datasets. The compared methods include KNN \citep{sun2022knnood}, MD \citep{NEURIPS2018_abdeb6f5}, NN-guide \citep{Park_2023_ICCV}, MLS \citep{vaze2022openset}, EBO \citep{liu2020energy}, MSP \citep{hendrycks17baseline}}
\label{tab:ablation-ood-detectors}
\resizebox{\linewidth}{!}{
\begin{tabular}{l cc cc cc}
\toprule
 & \multicolumn{2}{c}{RFMiD} & \multicolumn{2}{c}{ISIC2019} & \multicolumn{2}{c}{NCTCRC} \\
\cmidrule(lr){2-3}\cmidrule(lr){4-5}\cmidrule(lr){6-7}
Detector & AUROC & FPR95 & AUROC & FPR95 & AUROC & FPR95 \\
\midrule
KNN     & 69.82 & 85.11 & 69.90 & 91.15 & 72.27 & 72.05 \\
MD     & 71.50 & 81.30 & 68.20 & 94.35 & 73.08 & 79.56 \\
NN-guide & 76.55 & 57.33 & 77.29 & 71.13 & 80.08 & 56.58 \\
MLS  & 86.89 & 26.71 & 81.64 & 34.69 & 84.62 & 78.70 \\
EBO  & 88.13 & 24.66 & 83.01 & 34.05 & 85.10 & 73.93 \\
MSP (Ours) & 90.87 & 21.47 & 85.16 & 31.34 & 91.68 & 41.44 \\
\bottomrule
\end{tabular}}
\end{center}

Table~\ref{tab:ablation-ood-detectors} evaluates the effect of different OOD scoring mechanisms within the \texttt{MARVEL} framework. Overall, logit-based detectors consistently outperform distance-based alternatives across all datasets, both in terms of AUROC and FPR95.
Among the detectors, MSP consistently achieves the highest AUROC across all datasets (RFMiD: 90.87, ISIC2019: 85.16, and NCTCRC: 91.68).
EBO achieves the second-best performance, with AUROC values consistently lower by approximately $2.74\%$ (RFMiD), $2.15\%$ (ISIC2019), and $6.58\%$ (NCTCRC) compared to MSP.
In contrast, distance-based approaches such as KNN and MD perform poorly, with AUROC dropping by approximately $20\%$ on RFMiD, $15\%$ on ISIC2019, and $18\%$ on NCTCRC compared to MSP.
NN-guided detectors provide moderate improvements over purely distance-based methods but still lags behind classifier-based detectors (MSP, MLS, EBO).

\subsection{Qualitative analysis of learned representations}

\begin{figure*}
    \centering
    \includegraphics[width=\linewidth]{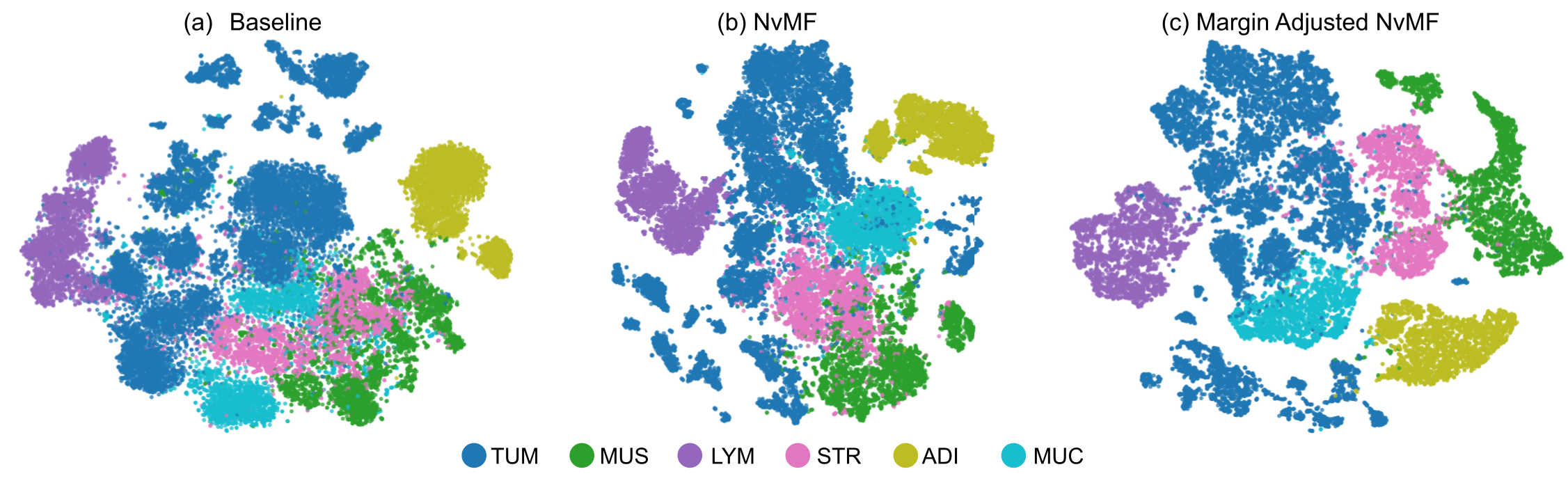}
    \caption{t-SNE visualization of learned feature representations in the NCTCRC dataset. Representation obtained using (a) vMF classifier (b) only NvMF classifier (c) the proposed margin-aware NvMF expert ensemble.}
    \label{fig:tsne}
\end{figure*}

Figure \ref{fig:tsne} illustrates the progressive improvement in feature separability across the three settings. 
Compared to the baseline (a), the NvMF classifier (b) produces more compact and better-defined clusters, as evident in classes such as LYM, with reduced overlap also observed for MUC and STR.
The most significant change, however, is observed in (c), where the margin-aware NvMF experts enforce visibly larger inter-cluster separation.
Classes such as MUC, STR, and MUS exhibit clearer boundaries and increased spacing from neighbouring clusters, indicating that margin-awareness actively pushes class representations apart.
This suggests that the proposed framework not only improves clustering quality but also explicitly enhances class discrimination by enforcing stronger geometric separation in the feature space.

\section{Discussion}
In this study, we present \texttt{MARVEL}, an integrated framework for long-tailed OOD detection that integrates a novel NvMF classifier, a label-distribution-aware multi-expert design for ID classification, and an explicit outlier expert for disentangling ID from OOD signals.
Across the spectrum of OOD settings, \texttt{MARVEL} consistently achieves strong performance, while the existing methods exhibit significant variability and lower performance across datasets and OOD regimes.
For instance, EAT performs competitively on Open-Set but degrades to the worst performance on FarOOD for the RFMiD dataset (Table \ref{tab:results_rfmid}), PATT ranks second on Open-Set yet performs the worst under corruption-based shifts on ISIC2019 (Table \ref{tab:results_isic}), and COCL achieves the best results on FarOOD but the worst on Open-Set OOD (Table \ref{tab:results_nctcrc}).
This inconsistency suggests that many prior methods are implicitly tailored to specific assumptions about OOD data.
For example, COCL formulates OOD detection as a classification problem using a single outlier class, which can lead to overfitting to the auxiliary OOD distribution seen during training and limits generalisation to unseen OOD types.
Similarly, EAT relies on augmentation-driven diversity and multiple synthetic outlier classes, which are effective when OOD samples resemble these generated variations but fail under structurally different farOOD scenarios.
As a result, these methods lack robustness across heterogeneous distribution shifts.
In contrast, \texttt{MARVEL} maintains stable performance by combining specialised experts for different regions of the label distribution with an explicit outlier expert, enabling it to handle both feature imbalance and distributional uncertainty in an integrated manner.
Notably, Open-Set detection remains the most challenging scenario, as evidenced by comparatively lower AUROC values across all methods.
This is expected, as Open-Set samples often lie close to the ID manifold and share semantic or visual similarities with tail classes, making them difficult to distinguish.
Even though \texttt{MARVEL} achieves the best performance in this setting, this is the only scenario where AUROC falls below $85\%$, highlighting the inherent difficulty of fine-grained OOD detection and indicating room for further improvement.
This observation is further supported by the Risk-Coverage analysis (Figure \ref{fig:risk-coverage}), where Open-Set scenarios consistently exhibit a steeper increase in risk compared to other OOD settings across all datasets.
This indicates that a larger proportion of novel samples are assigned high-confidence in-distribution predictions, reflecting their close proximity to the ID manifold. Such behaviour underscores the intrinsic difficulty of Open-Set detection, where subtle semantic and visual differences make reliable separation fundamentally challenging.

In ID classification, \texttt{MARVEL} consistently achieves the best performance in both overall accuracy and balanced accuracy across all datasets (Table \ref{tab:overall_acc_bacc}).
In contrast, competing methods exhibit a clear trade-off between these metrics.
COCL achieves second-best balanced accuracy on RFMiD and ISIC2019, reflecting its use of logit adjustment to explicitly address class imbalance, but does not attain comparable gains in overall accuracy.
PASCL achieves the second-best accuracy on ISIC2019 but not balanced accuracy, indicating improved feature learning without sufficient correction of class imbalance.
Similarly, PATT achieves the second-best accuracy on RFMiD and NCTCRC, but does not consistently achieve the second-best balanced accuracy, with the exception of NCTCRC where it attains both the second-best accuracy and balanced accuracy.
These observations suggest that achieving strong performance across both metrics is non-trivial in long-tailed settings and depends critically on how decision boundaries are learned under imbalance.
The superior ID classification performance of \texttt{MARVEL} can be attributed to two key factors.
First, the proposed NvMF classifier enables non-linear decision boundaries that better capture complex and overlapping class distributions, which is particularly beneficial for tail classes that lie near decision boundaries. 
Second, the margin-aware multi-expert design explicitly accounts for the heterogeneous nature of long-tailed data by allowing different experts to specialise across various regions of the label distribution (head, mid and tail), thereby reducing head-class dominance and improving representation for underrepresented classes.
Together, these design choices improve class separability while maintaining balanced performance across head and tail categories.

The impact of the design choices on uncertainty estimation is illustrated in Figure \ref{fig:calibration}.
OE and EAT exhibit underconfident predictions across inputs. This behaviour stems from their shared use of strong OOD regularization, which suppresses prediction confidence globally and leads to low-confidence outputs even for correctly classified ID samples.
In contrast, PASCL, PATT, and COCL tend to be overconfident, as their training objectives emphasise feature separation and sharp decision boundaries, resulting in high-confidence predictions even in ambiguous regions.
\texttt{MARVEL}, on the other hand, demonstrates improved calibration due to its multi-expert design, which effectively acts as an implicit ensemble.
In uncertain regions, disagreement among experts leads to moderated confidence estimates, reducing overconfident errors.
As a result, uncertainty is modelled structurally through the architecture rather than relying on post-hoc adjustments.

As for various OOD detection strategies, 
distance-based methods such as KNN and MD perform the worst overall (Table \ref{tab:ablation-ood-detectors}), as they rely on well-formed and compact feature clusters, an assumption that breaks down in long-tailed settings where tail class representations are sparse and poorly defined.
This limitation is particularly pronounced in nearOOD scenarios, where tail samples and OOD inputs occupy similar low-density regions, making distance-based scores unreliable.
In contrast, classifier-based methods such as MSP, MLS, and EBO achieve stronger performance by leveraging decision boundary information, which remains more robust under class imbalance.
Hybrid approaches such as NN-Guide partially mitigate these issues but still fall short of fully classifier-based strategies.
These results suggest that feature space alone is insufficient for reliable OOD detection under imbalance, motivating our choice of MSP as the scoring function.

Table \ref{tab:ablation-classifier-experts} provides further insights into architectural design choices.
For ID classification, hyperspherical classifiers (cosine, vMF, NvMF) consistently outperform a standard FC classifier (Table \ref{tab:ablation-classifier-experts}-A), as feature norms in long-tailed settings tend to correlate with class frequency, leading to biased representations.
Normalising features onto the hypersphere removes this bias and improves angular separation.
Among these, the proposed NvMF classifier achieves the best performance due to its non-linear decision boundaries, which are necessary to separate classes with heterogeneous and overlapping feature distributions prevalent in imbalanced settings.
For OOD classification, removing the outlier expert leads to a consistent drop in performance (Table \ref{tab:ablation-classifier-experts}-B), highlighting the importance of explicit OOD modelling.
The outlier expert operates on a binary classification task that separates ID and OOD samples.
During training, OOD samples from the auxiliary dataset are balanced with ID samples within each batch, resulting in a well-conditioned learning setting. 
Unlike ID classification, this task does not involve class imbalance or complex multi-class decision boundaries. Hence, we use a simple FC classifier to effectively separate the two distributions. Additional modelling capacity offers limited benefit, making the FC classifier a natural and effective choice, with NvMF remaining competitive (Table \ref{tab:ablation-classifier-experts}-B).

From a data-centric perspective, these trends are further influenced by the intrinsic characteristics of the datasets.
RFMiD is the most challenging due to subtle visual differences between retinal pathologies, low inter-class separability, and severe imbalance, particularly for rare conditions.
In contrast, NCTCRC exhibits more distinct structural patterns, enabling clearer feature separation.
ISIC2019 lies between these extremes, with moderate variability across classes but significant intra-class diversity.
These differences highlight that OOD detection performance is closely tied to data complexity, where subtle variations and rare classes increase ambiguity between ID and OOD samples.
An imbalance ratio of 1:50 (tail:head) was chosen to reflect a realistic long-tailed regime. At this imbalance, the least frequent classes have limited samples, e.g., 4 in RFMiD, 72 in ISIC2019, and 170 in NCTCRC, indicating severe sparsity. Higher ratios would further reduce tail-class representation, shifting the problem towards extreme few-shot learning, complicating estimation of decision boundaries and evaluation metrics. Thus, the chosen imbalance ratio simulates realistic distributions and maintains sufficient data for meaningful learning and fair comparison. Under this ratio of severe yet statistically reliable imbalance, the ability of \texttt{MARVEL} to model complex decision boundaries and adapt to label imbalance is critical for achieving consistent performance.

Despite the strong performance of \texttt{MARVEL}, a few limitations remain. 
Firstly, the framework relies on auxiliary OOD data for training the outlier expert, and its effectiveness may depend on the diversity and representativeness of these samples.
Secondly, although our evaluation spans multiple datasets and OOD scenarios, real-world clinical settings may involve even broader and compounded distribution shifts.
Further, a common limitation of OOD methods arises from the assumption of a fixed and well-defined label space. In real-world clinical practice, diagnostic taxonomies are often hierarchical, evolving, and subject to inter-observer variability. For instance, certain conditions may be grouped differently across institutions or redefined as new clinical knowledge emerges. As a result, the distinction between ID and OOD samples may itself become ambiguous, particularly when rare or borderline cases lie between established categories. 
 
Future directions of this work include exploring an even wider range of distribution shifts for various imaging modalities and improving the interpretability of OOD decisions, for instance, by identifying the features or regions that contribute to OOD prediction; recent advances in foundation models and vision-language models offer a promising direction for such explanations. Another interesting extension would be continual or lifelong learning settings, where the framework can incorporate new classes and evolving distributions without degrading performance on previously learned categories. Further work would involve extending the approach to dense prediction tasks, such as segmentation, to broaden its applicability to more complex workflows.

\section{Conclusions}
In this work, we have addressed OOD detection under long-tailed class distributions in medical imaging using \texttt{MARVEL}, an integrated framework that combines a nonlinear vMF classifier, a margin-aware multi-expert learning strategy, and a dedicated outlier expert for explicit OOD modelling. In addition to methodological contributions, our work also includes a clinically motivated evaluation protocol that spans a wide spectrum of distributional shifts, including both nearOOD and farOOD scenarios, across diverse medical imaging datasets.
Experimental results demonstrate consistent improvements over state-of-the-art methods, highlighting the effectiveness of jointly addressing class imbalance and OOD detection in a unified manner.
This capability makes \texttt{MARVEL} well-suited for real-world applications, where reliable identification of anomalous or OOD cases is critical for supporting decision-making and ensuring safety. Future directions include exploring wider distribution shifts, improving the explainability, and extending the framework for dense prediction tasks such as medical image segmentation.

\end{multicols}

\nocite{*} 
\printbibliography

\end{document}